\documentclass{article} \usepackage{iclr2027_conference,times}

\usepackage{xcolor}
\usepackage{hyperref}
\usepackage{url}
\usepackage{amsmath, amssymb, amsthm}
\usepackage{mathtools}
\usepackage{booktabs}
\usepackage{hyperref}
\usepackage{enumitem}
\usepackage{multirow}
\usepackage[dvipsnames]{xcolor}
\usepackage{microtype}
\usepackage{researchPack}
\usepackage{natbib}
\usepackage{cleveref}
\usepackage{tabularx}
\usepackage{multirow}
\usepackage{nicefrac}
\hypersetup{
  colorlinks=true,
  linkcolor=blue!50!black,
  citecolor=blue!50!black,
  urlcolor=blue!50!black
}
\newtheorem*{lemma:flip-decomp-restated}{Lemma 1}
\newtheorem*{lemma:label-aligned-flip-rates}{Lemma 2}
\newtheorem*{proposition:hoeffding}{Proposition 2}
\newtheorem*{proposition:finite_multi}{Proposition 1}
\newtheorem*{proposition:group_recover_not_necessary}{Proposition 1}
\newtheorem*{theorem:probe_orth}{Theorem 1}
\newtheorem*{theorem:residual_alignment}{Theorem 1}
\newtheorem*{corollary:explicit_bias}{Corollary 1}
\newtheorem*{theorem:acc_improv}{Theorem 2}
\newtheorem*{definition:mce}{Definition 3}
\newtheorem*{corollary:group_recoverability_suffices}{Corollary 1}
\newtheorem{corollary:group_recoverability_suffices_app}{Corollary}

\usepackage{xspace}
\newcommand{\indicator}[1]{\mathbb{I}_{\{#1\}}}

\newcommand{\mbx}{\mathbf{x}}
\newcommand{\pup}{\bbP(X \in \calC^\uparrow)}
\newcommand{\pdn}{\bbP(X \in \calC^\downarrow)}
\newcommand{\qdn}{\bbP(Y = 1 \mid X \in \calC^\downarrow)}
\newcommand{\qup}{\bbP(Y = 1 \mid X \in \calC^\uparrow)}

\newcommand{\method}{{\tt CaLLM}\xspace}
\newcommand{\methodG}{{\tt CaLLM}\xspace$\!\!+g$}

\newcommand{\etaxp}{\eta_{X}}

\newcommand{\etax}{\bbE[Y\mid X]}

\newcommand{\mcgrad}{{\tt MCGrad}\xspace}

\title{A CaLLM Judge is a Better Judge: \\ Mitigating Bias in LLM-as-a-judge}

\author{Cesare Barbera$^{1,4}$\thanks{Joint first authorship and corresponding authors.} \quad
Luca Stradiotti$^{2, \ast}$ \quad
Lorenzo Cascioli$^{2}$ \quad
Lorenzo Perini$^{3}$ \\
\textbf{Niek Tax}$^{3}$ \quad
\textbf{Daniel Haimovich}$^{3}$ \quad
\textbf{Jesse Davis}$^{2}$ \quad
\textbf{Andrea Passerini}$^{4}$ \quad
\textbf{Andrea Pugnana}$^{4}$ \\[4pt]
$^{1}$University of Pisa \quad
$^{2}$KU Leuven \quad
$^{3}$Meta Platforms, Inc. \quad
$^{4}$University of Trento\\[4pt]
\texttt{cesare.barbera@phd.unipi.it}, \quad
\texttt{\{name.surname\}@kuleuven.be}, \\
\texttt{\{lorenzoperini,niek,danielha\}@meta.com}, \quad
\texttt{\{name.surname\}@unitn.it}
}

\iclrfinalcopy \begin{document}

\maketitle
\begin{abstract}
Large language models (LLMs) are increasingly used as judges to predict which of two candidate answers a human evaluator would prefer, providing a scalable alternative when human evaluation is too costly. Unfortunately, LLM judgments can exhibit systematic biases. Prior work has documented position, verbosity or model family of the candidate answers influencing the judge decision rather than their quality, while additional biases may remain undocumented. 
\emph{Multicalibration} provides a natural framework to address this problem: each bias corresponds to miscalibration on an identifiable group, \eg, family bias manifests as overconfidence in answers generated by models belonging to the same family of the judge. The challenge is to fix miscalibration simultaneously across many, potentially unknown groups. 
To this end, we introduce \method, a post hoc framework that multicalibrates an LLM judge, thereby bounding all such biases simultaneously. We prove that \method bounds the bias of every group whose error structure is captured by an available representation, and that augmenting it with explicit bias groups further reduces the residual bias. We then derive theoretical conditions under which \method improves the judge's accuracy, rather than trading it for calibration. Finally, we evaluate \method across pairwise preference benchmarks, showing that it reduces the multicalibration error on both specified and unspecified bias groups, while improving predictive performance of the judge.
\end{abstract}

\section{Introduction}

When evaluating the outputs of generative models, obtaining human annotations at scale is often too slow and expensive~\citep{li2024survey}. As a consequence, the LLM-as-a-judge framework has emerged as a practical alternative~\citep{zheng2023judging}: a strong language model is prompted to score or compare candidate responses, enabling evaluations at a scale that human raters cannot match.
However, LLM judges can exhibit systematic biases that undermine the reliability of their evaluations~\citep{gu2024survey,DBLP:journals/corr/abs-2508-06225}. For example, \citet{zheng2023judging} identify three such biases: \emph{position bias}, \ie the judge favours the answer shown first or second independently of its content, \emph{verbosity bias}, \ie the judge favours longer answers regardless of quality, and \emph{family bias}, \ie the judge favours answers produced by its own model family.

Although mitigation strategies have been proposed, controlling these biases with theoretical guarantees remains an open challenge~\citep{DBLP:journals/corr/abs-2412-12509, li2025generation}. 
We propose viewing systematic biases through the lens of group-wise miscalibration. From this perspective, each bias is a miscalibration of the judge's confidence within a particular group: 
\eg, \textit{verbosity bias} arises in pairwise evaluations where one answer is much longer than the other. 
To address this within-group miscalibration problem, we resort to \textit{Multicalibration}~\citep{hebert2018multicalibration}, a theoretical framework that requires predictions to be calibrated simultaneously across identifiable groups. Specifically, for each group, the absolute expected discrepancy between the predictions and the label must be below a small threshold~$\alpha$.
Problematically, the bias groups of interest are defined over the original input space, whereas LLM evaluation operates on unstructured text and a multicalibrator can only observe a learned representation of it. This raises the question of whether multicalibration in representation space controls biases defined in the original input space.

In this work, we address these challenges with Multi-\underline{Ca}librated \underline{L}arge \underline{L}anguage \underline{M}odels (\method), a post hoc bias-mitigation framework that implements bias-correction guarantees in practice (see \Cref{fig:vis_abstract}). First, we cast the LLM-as-a-judge bias problem within a multicalibration framework, mapping biases to groups.
We then prove that multicalibration in representation space controls bias groups in the original input space whenever their error structure is recoverable from that representation. Importantly, this does not require the groups to be explicitly specified to the calibrator.
Finally, we show that, under suitable conditions, bias reduction need not come at the expense of predictive performance, rather \method guarantees improvements of the judge's accuracy.
Overall, \method turns these calibration and performance guarantees into a practical post-hoc procedure for mitigating both explicitly specified and recoverable biases in LLM judges.

\paragraph{Our contributions.} In a nutshell, our contributions are: 
\begin{itemize}
    \item [$(i)$] We formalize the LLM-as-a-Judge biases as a specific instance of a multicalibration problem;
    \item [$(ii)$] We propose \method, a post-hoc bias-mitigation strategy that leverages multicalibration to bound identifiable judge's bias;
\item [$(iii)$] We prove that multicalibration in representation space bounds recoverable input-space bias patterns and derive conditions under which it improves the judge’s accuracy;
    \item [$(iv)$] We show on five benchmark datasets that \method mitigates documented biases, while simultaneously improving the performance of the LLM-as-a-judge framework.
\end{itemize}

\begin{figure}
    \centering
    \includegraphics[width=.9\linewidth]{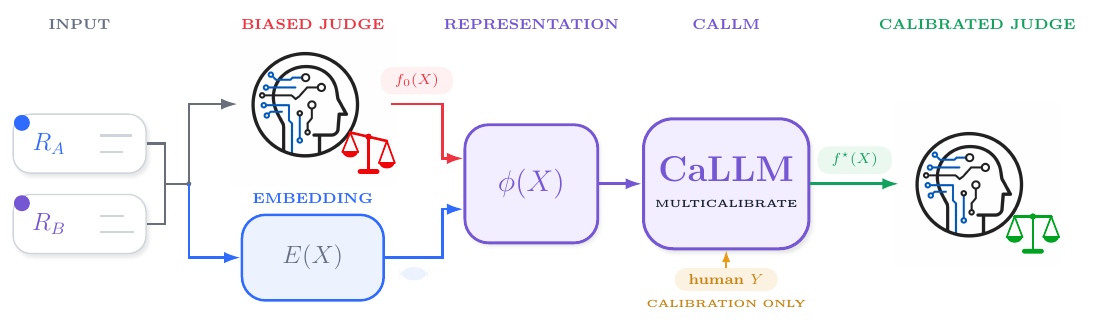}
    \caption{\method's pipeline. We feed \method with some representation $\phi(X)$ of the input (e.g., embeddings $E(X)$ and base scores $f_0(X)$). The final output is a multi-calibrated score $f^\star(X)$. }
    \label{fig:vis_abstract}
\end{figure}

\section{Preliminaries}

\textbf{LLM-as-a-judge.}
We focus on pairwise evaluation of candidate answers, which has emerged as a popular approach for assessing the quality of LLM outputs~\citep{gu2024survey}. In this setting, the judge receives a user prompt with two anonymized candidate responses and selects the best output, emulating human preference. 
More formally, a pairwise evaluation instance is a random variable $X = (Q, R_1, R_2) \in \mathcal X$, where $Q$ is the prompt 
and $R_1, R_2$ are two candidate responses, respectively.
The binary label space $\calY \in \{0,1\}$ indicates human preference, with $Y=1$ ($Y=0$) denoting that the human prefers the first (second) answer.
We assume access to a given dataset \( \calD = \{(\mbx_i, y_i)\}_{i=1}^n \) of input-output pairs drawn from an unknown joint distribution \( \mathcal{P} \) over \( \mathcal{X} \times \mathcal{Y} \). The LLM judge induces a continuous scoring function $f : \calX \to [0,1]$, which estimates the true conditional probability $\bbP(Y=1 \mid X)$. Thus, $f(X)$ is the judge's confidence in the first answer. 
We distinguish two instances of the judge's confidence: $f_0$ denotes the \emph{base} score returned by the judge, and $f^\star$ the calibrated score. We write $f$ whenever a statement holds for both. We obtain hard predictions by thresholding $f$ at $\nicefrac{1}{2}$ and define the judge's accuracy as $\textstyle \mathrm{Acc}(f) = \bbP\bigl(\indicator{f(X)\ge\nicefrac{1}{2}} = Y\bigr)$.

\textbf{Bias groups.}
\label{sec:bias_groups}
We represent each LLM bias through a collection of binary groups $\calG$.
Due to the inherently comparative nature of the task, to define groups we first impose a fixed ordering on the candidate responses.
Let
\(
\pi:\calX\to\widetilde{\calX}
\)
denote a deterministic reordering map such that
\(
\pi(X)=(Q,R_A,R_B),
\)
where $R_A$ and $R_B$ are the original responses ordered according to a rule
independent of $Y$ and $f$\footnote{This ensures that group membership remains
unchanged after calibration.}, e.g., the lexicographical ordering of their
source-model names.
We represent group membership through a measurable attribute 
\(
a:\widetilde{\calX}\to\calV_a,
\)
whose value indicates how $R_A$ compares with $R_B$ along the corresponding
attribute dimension. Each value $v\in\calV_a$ induces a binary group
\(
\tilde g_{a,v}(\tilde X)
:=
\indicator{a(\tilde X)=v}.
\)
The corresponding group indicator on the original input space is defined by
composition,
\(
g_{a,v}
:=
\tilde g_{a,v}\circ\pi
\) 
so that 
\(
g_{a,v}(X)
=
\tilde g_{a,v}(\pi(X)).
\)
For simplicity, we henceforth write a generic group as $g$, omitting the indices $(a,v)$ when they are not needed explicitly.
\begin{example*}[Documented-biases groups] 
Documented biases~\citep{zheng2023judging} can be represented through the following attributes:
(i) \textbf{Position}, $a^{\mathrm{pos}}(\tilde X)\in\{-1,+1\}$, indicating whether $R_A$ is shown second or first;
(ii) \textbf{Verbosity}, $a^{\mathrm{len}}(\tilde X)\in\{-1,0,+1\}$, indicating whether $R_A$ is shorter, the same or longer than $R_B$;
(iii) \textbf{Family}, $a^{\mathrm{fam}}(\tilde X)\in\{-1,0,+1, +2\}$, indicating whether neither response belongs to the judge's model family, only $R_B$  does, only $R_A$ does, or both do.
Each group is defined by the presence of a specific attribute value, or by a combination of values across multiple attributes.
\label{remark:biasaxes}
\end{example*}
A \textit{bias} is then a systematic discrepancy between the judge's confidence and the true human preference rate \emph{within} these groups. 
However, measuring bias without accounting for the judge's confidence may hide discrimination that emerges only at particular confidence levels. Thus, we additionally condition on a finite collection of confidence bands $\calI$, \eg, sub-intervals of $[0,1]$. Then, for each group, we compute the conditional error within each confidence band and define the judge's per-group bias as the largest of these errors~\citep{DBLP:conf/kdd/TaxPLHKOVA26}:
\begin{definition}[Per-group bias]
\label{def:peraxisbias}
Let $\mathcal{G}$ be a collection of measurable group indicators such that $g:\calX\to\{0,1\}$ 
and let $\calI$ be a finite collection of sub-intervals of $[0,1]$. The per-group bias of $f$ is the worst conditional deviation across confidence bands:
\[
  \mathrm{bias}_{g,\calI}(f)
  \;:=\;
  \max_{I \in \calI}
  \;\Biggl|\,
  \bbE\Bigl[\,Y - f(X) \,\bigm|\, g(X)= 1,\; f(X) \in I \,\Bigr]
  \,\Biggr| \notag
\]
\end{definition}
We stress that $(i)$ this is a \emph{distributional} notion of bias rather than a bias-specific criterion such as counterfactual swap consistency~\citep{DBLP:conf/ijcnlp/ShiMLDMV25}; $(ii)$ it applies uniformly to both documented and unspecified biases as it measures conditional deviations over arbitrary bias-induced groups.

\textbf{Multicalibration.}
According to~\cref{def:peraxisbias} the judge is unbiased along $g$ when its residuals $Y-f(X)$ average to zero within that group across every confidence band.
This is a within-group calibration constraint, and controlling every group at once maps precisely to \emph{multicalibration}~\citep{hebert2018multicalibration}, which requires $f$ to satisfy the same approximate calibration guarantee simultaneously over a collection $\calG$ of identifiable groups.
\begin{definition}[Multicalibration]
\label{def:multicalibration}
Let $\mathcal{G}$ be a collection of measurable group indicators such that
$g:\calX\to\{0,1\}$ and let $\calI$ be a finite collection of sub-intervals of
$[0,1]$. For $g\in\mathcal{G}$ and $I\in\calI$, define the
multicalibration deviation and the noise scale as
\(
\Delta_{g,I}(f)
\;:=\;
\bbE\bigl[\,\bigl(Y-f(X)\bigr)\,g(X)\,\indicator{f(X)\in I}\,\bigr],
\)
\(
\tau_g(f)^2 \;:=\; \bbE\bigl[\,g(X)\,f(X)\bigl(1-f(X)\bigr)\,\bigr].
\)
Given $\alpha \ge 0,\,$the predictor $f$ is
\emph{$(\mathcal{G},\calI,\alpha)$-multicalibrated} if 
\[
\bigl|\Delta_{g,I}(f)\bigr|
\;\leq\;
\alpha\,\tau_g(f),
\qquad
\forall\,g\in\mathcal{G},\ \forall\,I\in\calI \notag
\]
\end{definition}
\noindent
In other terms, $\Delta_{g,I}$ is the expected amount by which the judge over- or under-states its confidence on the instances such that $g(\mbx) =1 $ and $f(\mbx) \in I$, while $\tau_g$ captures the corresponding scale of sampling variability.
We quantify how far a judge is from this requirement through the Multicalibration Error (MCE)~\citep{guy2025measuring} \ie, the largest group-level Expected Cumulative Calibration Error range (ECCE-R)~\citep{tygert2023ecce}. ECCE-R orders examples by predicted score, cumulatively sums the residuals \(y_i-f(\mbx_i)\), and measures the range spanned by this cumulative error curve.
\begin{definition}[Multicalibration Error]
\label{def:mce}
Consider a collection $\mathcal{G}$ of measurable group indicators $g:\mathcal{X}\rightarrow\{0,1\}$. For each $g\in\mathcal{G}$, let
\(
\mathcal{D}_g
:=
\{(\mbx_i,y_i)\}_{i=1}^{n_g}
\)
denote the sample restricted to instances satisfying $g(\mbx_i)=1$,
and let $\operatorname{ECCE\text{-}R}_g$ be the ECCE-R computed
restricting the sample to $\mathcal{D}_g$. Under the null hypothesis of perfect calibration, define the corresponding noise scale as 
\(
\tau_g(f)
:=
\frac{1}{n_g}
\sqrt{
\sum_{i=1}^{n_g}
f(x_i)\bigl(1-f(x_i)\bigr)
}.
\)
The empirical Multicalibration Error of $f$ over
$\mathcal{G}$ is 
\[
\operatorname{MCE}
:=
\max_{g\in\mathcal{G}}
\frac{
\operatorname{ECCE\text{-}R}_g(f)
}{
\tau_g(f)
} \notag
\]
\end{definition}
We report MCE both relative to group prevalence (\(\mathrm{MCE}_{\%}\)) and in units of the sampling variation expected under perfect calibration (\(\mathrm{MCE}_{\sigma}\)). We also report accuracy together with log loss, which is a strictly proper scoring rule~\citep{gneiting2007strictly}. We provide further details in~\cref{app:metrics}.

\section{Multicalibrated Large Language Models}
\label{sec:method}
Our work tackles the following problem:

\begin{tabularx}{\linewidth}{@{}lX@{}}
    \textbf{Given:} & a calibration set $\mathcal D^{\mathrm{cal}} \subseteq \mathcal D$ and, for each pairwise evaluation, a judge confidence score $f_0(X)$ that $R_1$ is preferred and a representation $\phi:\calX\to\mathbb{R}^d$; \\[0.3em]
    
    \textbf{Do:} & Multicalibrate $f_0(X)$, producing a confidence score $f^\star(X)$  that controls the bias of every $g\in\calG$ whose error structure is recoverable from $\phi(X)$, \ie, $f^\star$ is $(\calG,\calI,\alpha)$ multicalibrated.
\end{tabularx}

The core idea of \method is to start with uncalibrated confidence scores and apply a post-hoc multicalibration procedure to uncover and correct systematic biases. This has three challenges.
\textbf{First}, standard multicalibration is typically formulated over features from which relevant groups can be directly identified, whereas we operate on a latent representation in which bias groups may be only implicitly encoded. We show that multicalibration nevertheless bounds the residual bias of any group whose error structure can be recovered from this representation (\cref{subsec:bias_gaurantees}).
\textbf{Second}, some bias groups may be known a priori, yet relying solely on the implicit encoding in the representation of their error patterns can weaken the bias-mitigation guarantee. We show that providing these groups directly to the multicalibrator can further reduce their residual error (\cref{subsec:named_groups}).
\textbf{Third}, correcting bias can change the judge's decisions, raising the concern that it may reduce accuracy. We show that accuracy improves whenever the confidence gained on corrected decisions is large enough to outweigh the residual calibration error permitted by approximate multicalibration (\cref{subsec:perf_guarantees}).

\subsection{Bias Mitigation Guarantees for Multicalibrated LLM Judges}
\label{subsec:bias_gaurantees}
Classical multicalibration is typically formulated over an input space in which relevant groups can be directly identified through interpretable covariates. In our setting, however, the inputs are
unstructured natural-language responses, and the multicalibrator instead operates on a learned representation $\phi(X) \in \mathbb{R}^d$, which can consist, \eg, of embeddings $E(X)$ and optionally base scores $f_0(X)$. Consequently, the representation need not preserve group membership itself, but may still preserve the systematic error structure associated with that group.
This creates a gap between the groups on which biases are defined and the feature space available to the multicalibrator. 
We therefore characterize how the information preserved by $\phi(X)$ interacts with the judge's systematic prediction error. To this end, we derive an interpretable decomposition of the residual bias in an arbitrary input-space group $g$.
Consider a class $\calH$ of functions $h:\mathbb R^d\to[0,1]$ defined over the representation \(\phi(X)\), which are available to the multicalibrator. Depending on the choice of \(h\), these functions may capture group membership, systematic residual structure, or other identifiable patterns in the representation.
Unlike prior work on proxy-group multicalibration~\citep{DBLP:conf/icml/BhartiCYS25}, our decomposition separates missing label information from missing group information, so a group can be protected when the representation captures its error structure, even if it cannot recover the group. Formally,
let us denote with $B_I = \{f(X) \in I\}$ confidence-band membership and let us define the following quantities \ie,
\begin{align}    
&\lambda_I^2=\bbE\!\left[
\left(
\etax-f(X)
-
\bbE\!\left[
\etax-f(X)
\mid
\phi(X),\,B_I
\right]
\right)^2
\mid B_I
\right], \notag\\
&\varphi_I^2=\bbE\!\left[
\left(
g(X)
-
\bbE\!\left[
g(X)
\mid
\phi(X),\,B_I
\right]
\right)^2
\mid B_I
\right], \notag\\
&\psi_I^2=\bbE\!\left[
\left(
\bbE\!\left[
g(X) -
h(\phi(X))
\mid
\phi(X),\,B_I
\right]
\right)^2
\mid B_I
\right] \notag 
\end{align}
which measure the judge error and group information not captured by $\phi(X)$ within band $I$ and the representation-level group-approximation error.
We now show how these components bound the bias:
\begin{restatable}[Residual bias decomposition]{theorem}{residual}
\label{thm:residual_alignment}
Let $g\in\calG$ and fix $h\in\calH$ and define
\(
\tau_h(f)^2
=
\bbE\!\left[
h(\phi(X))^2 f(X)(1-f(X))
\right]
\)
Assume that $f$ is multicalibrated over $(\mathcal H,\mathcal I)$, i.e.,
\(
\left|
\bbE\!\left[
(Y-f(X))h'(\phi(X))\indicator{B_I}
\right]
\right|
\le
\alpha\tau_{h'}(f)
\) for all
\(h'\in\mathcal H,\ I\in\mathcal I
\)
Fix $h_1,\ldots,h_m\in\mathcal H$ and a constant $C>0$. For every $I\in\mathcal I$ with $\bbP(B_I)>0$, let
$c_I\in\mathbb R^m$ satisfy $\|c_I\|_1\le C$ and define
\[
e_I
=
\bbE\!\Biggl[
\Biggl(
\bbE\!\Bigl[
\bbE[Y\mid X]-f(X)
\mid
\phi(X),B_I
\Bigr]
-
\sum_{j=1}^m c_{I,j}h_j(\phi(X))
\Biggr)^2
\Biggm| B_I
\Biggr]^{1/2}. \notag
\]
with $\chi_I=e_I + \sqrt{\nicefrac{C\alpha}{\mathbb P(B_I)}}$.
Then for every $g \in \calG$ and $I \in \calI$ such that \(\bbP(g(X)=1, B_I)>0\),
\[
\operatorname{bias}_{g,\mathcal I}(f)
\le
\max_{I\in\mathcal I}
\frac{
\alpha\tau_h(f)
+
\bbP(B_I)
\Biggl[
\lambda_I\varphi_I
+
\psi_I \chi_I\Biggr]}
{
\bbP(g(X)=1,B_I)
} 
\notag
\]
\end{restatable}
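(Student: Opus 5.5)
Fix $I\in\calI$ with $\bbP(g(X)=1,B_I)>0$. The plan is to bound the numerator $\bbE[(Y-f(X))g(X)\indicator{B_I}]$ and then divide by $\bbP(g(X)=1,B_I)$. This works because
\[
\bbE\bigl[Y-f(X)\mid g(X)=1,B_I\bigr]=\frac{\bbE[(Y-f(X))g(X)\indicator{B_I}]}{\bbP(g(X)=1,B_I)}.
\]
Write $r(X)=\etax-f(X)$. By the tower property, $\bbE[(Y-f)g\indicator{B_I}]=\bbE[r\,g\,\indicator{B_I}]$. Define $\bar r=\bbE[r\mid\phi(X),B_I]$ and $\bar g=\bbE[g\mid\phi(X),B_I]$.

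Working conditionally on $B_I$, I would split $r g=(r-\bar r)(g-\bar g)+\bar r\,g+(r-\bar r)\bar g$. The last term has zero conditional mean, since $\bar g$ is $\sigma(\phi,B_I)$-measurable and $\bbE[r-\bar r\mid\phi,B_I]=0$. For the first term, Cauchy–Schwarz gives at most $\lambda_I\varphi_I$. For the middle term, conditioning on $\phi$ again turns $\bbE[\bar r g\mid B_I]$ into $\bbE[\bar r\bar g\mid B_I]$, which I decompose as
\[
\bbE[\bar r\bar g\mid B_I]=\bbE[\bar r\,h(\phi)\mid B_I]+\bbE\bigl[\bar r\,(\bar g-h(\phi))\mid B_I\bigr].
\]
The identity $\bar g-h(\phi)=\bbE[g-h(\phi)\mid\phi,B_I]$ holds because $h(\phi)$ is $\phi$-measurable. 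Multiplying back by $\bbP(B_I)$, the $h$-term becomes $\bbE[r\,h(\phi)\indicator{B_I}]=\bbE[(Y-f)h(\phi)\indicator{B_I}]$. Multicalibration over $\calH$ bounds this by $\alpha\tau_h(f)$, which accounts for the first numerator term.

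The remaining term is $\bbE[\bar r\,(\bar g-h)\mid B_I]$, and I expect it to be the main obstacle. Cauchy–Schwarz gives at most $\psi_I\,\|\bar r\|_{L^2(\cdot\mid B_I)}$, so I need $\|\bar r\|\le\chi_I$. To get this, write $\|\bar r\|^2=\bbE[\bar r(\bar r-\sum_j c_{I,j}h_j)\mid B_I]+\sum_j c_{I,j}\bbE[\bar r h_j\mid B_I]$. The first piece is at most $\|\bar r\|e_I$ by Cauchy–Schwarz. Each $\bbE[\bar r h_j\mid B_I]=\bbE[(Y-f)h_j\indicator{B_I}]/\bbP(B_I)$ is bounded by $\alpha\tau_{h_j}(f)/\bbP(B_I)$ via multicalibration. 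Since $h_j\in[0,1]$ and $f(1-f)\le 1/4$, we have $\tau_{h_j}\le 1/2\le1$, so the sum is at most $C\alpha/\bbP(B_I)$. Setting $x=\|\bar r\|$, this gives the quadratic inequality $x^2\le e_I x+C\alpha/\bbP(B_I)$. Hence $x\le e_I+\sqrt{C\alpha/\bbP(B_I)}=\chi_I$, using $x\le (e_I+\sqrt{e_I^2+4b})/2\le e_I+\sqrt b$.

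Assembling the three pieces, for each $I$ the absolute numerator is at most $\alpha\tau_h(f)+\bbP(B_I)[\lambda_I\varphi_I+\psi_I\chi_I]$. Dividing by $\bbP(g(X)=1,B_I)$ and taking the maximum over $I$ gives the stated bound. The main points to verify carefully are the measurability claims that make the cross term vanish and the quadratic step for $\|\bar r\|$.
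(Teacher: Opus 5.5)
Your proposal is correct and follows essentially the same argument as the paper. Both project the residual and $g$ onto $(\phi(X),B_I)$ to obtain the $\lambda_I\varphi_I$ and $\psi_I\,\|\bar r\|$ terms, and both derive $\|\bar r\|\le\chi_I$ from the same quadratic inequality via multicalibration along $\sum_j c_{I,j}h_j$. The only difference is cosmetic: the paper splits $g=h+(g-h)$ by the triangle inequality and then decomposes $(g-h)$ around $\bar g$, while you decompose $g$ first and insert $h$ into the $\bar r\,\bar g$ term afterwards, which gives identical terms.
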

\begin{proof}
We provide the proof in~\cref{app:proof_thm1}. 
\end{proof}

The bound separates three sources of residual bias. The first term, $\alpha\tau_{h}(f)$, is the calibration violation that remains along the representation-space group $h$. 
This is controlled directly by multicalibration. Classical multicalibration procedures iteratively identify residual violations over a function class and update the predictor along them, with each non-trivial correction yielding progress in squared prediction error~\citep{hebert2018multicalibration}. Viewing $\calH$ as the multicalibrator's function class over $\phi(X)$, the residual-orthogonality assumption therefore captures the multicalibration machinery.
The second term, $\lambda_I\varphi_I$, couples the portions of the judge's error and group membership that are not captured by the representation. 
Finally, $\chi_I\psi_I$ couples group-approximation error with the systematic error visible in representation space ($e_I)$ plus a multicalibration penalty.

Importantly, the latter two contributions depend on \emph{products}. Hence, accurate recovery of group membership is not necessary for bias control: $\lambda_I\varphi_I$ can be small if either the relevant error structure or the group structure is well captured by the representation, while $\chi_I\psi_I$ can be small if either little systematic error ($e_I$) remains visible in representation space or $h$ accurately captures the relevant group component.
Thus, poor input-space group recovery alone does not imply poor bias mitigation, what matters is how the information preserved in the representation interacts with the judge's error.

\subsection{The Value of Explicit Bias Specification}
\label{subsec:named_groups}

Perhaps surprisingly, eliminating the multicalibration term need not mitigate the bias of $g$. Even under exact residual orthogonality to $h$, the interaction between the unrecovered error and group components may remain non-zero. The following proposition shows that this term is unavoidable.

\begin{restatable}[Necessity and tightness of the unrecoverable interaction]
{proposition}{tightness}
\label{prop:tightness}
Assume that $\calI$ contains a confidence band including
$\nicefrac12$, that $\calH$ contains the constant function
$h\equiv\nicefrac12$, and that
$\operatorname{bias}_{g,\calI}(f)$ is evaluated only over
$I\in\calI$ such that
$\bbP(g(X)=1,f(X)\in I)>0$.
Then, for every $p_g\in(0,1)$ and every $0<\rho\le\frac12\min\{p_g,1-p_g\}$, there exist a distribution, a group $g$ with $\bbP(g(X)=1)=p_g$, a representation $\phi(X)$ and a predictor $f\equiv\nicefrac12$ such that
$f$ is perfectly calibrated and the residual is orthogonal to $h$ on every $I\in\calI$. Moreover, for every $I\in\calI$,
\(
\lambda_I\varphi_I=\rho\), \(
\chi_I\psi_I=0,
\)
while
\[
\operatorname{bias}_{g,\calI}(f)
=
\frac{\lambda_I\varphi_I}{p_g}
=
\frac{\rho}{p_g}\notag
\]
\end{restatable}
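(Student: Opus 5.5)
The plan is to give an explicit construction in which the representation carries no information, so that the whole bias is carried by the unrecoverable interaction. Fix $p_g\in(0,1)$ and $\rho$ as in the statement. Let $X$ be any input space with a group indicator $g$ satisfying $\bbP(g(X)=1)=p_g$. Take $\phi$ to be constant, e.g. $\phi(X)\equiv 0$, and let $f\equiv\nicefrac12$. Define the conditional label probability
\[
\etax \;=\; \tfrac12 + t\,\bigl(g(X)-p_g\bigr),
\qquad
t:=\frac{\rho}{p_g(1-p_g)} ,
\]
and draw $Y\mid X\sim\mathrm{Bernoulli}(\etax)$.

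First I check that this is a valid distribution. The largest deviation of $\etax$ from $\nicefrac12$ is $t\max\{p_g,1-p_g\}=\rho/\min\{p_g,1-p_g\}$. This is at most $\nicefrac12$ exactly under the hypothesis $\rho\le\frac12\min\{p_g,1-p_g\}$, so $\etax\in[0,1]$. This is the only place the constraint on $\rho$ enters.

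Next I fix which bands matter. Since $f\equiv\nicefrac12$, the only bands $I\in\calI$ with $\bbP(g(X)=1,f(X)\in I)>0$ are those containing $\nicefrac12$, and such a band exists by assumption. For these bands $B_I$ is the whole space, so conditioning on $B_I$ is trivial.

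Calibration and orthogonality then follow directly. We have $\bbE[Y-f(X)]=\bbE[\etax]-\nicefrac12=t\,\bbE[g(X)-p_g]=0$. Because $\phi$ is constant, every $h'(\phi(X))$ is a constant, so $\bbE[(Y-f)h'(\phi)\indicator{B_I}]=0$ for every $h'\in\calH$, and in particular for $h\equiv\nicefrac12$. Thus the residual is orthogonal to $h$ with $\alpha=0$, and $f$ is perfectly calibrated, since its single value $\nicefrac12$ equals $\bbE[Y]$.

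Now I compute the four quantities of \cref{thm:residual_alignment}. Since $\phi$ is constant, all conditional expectations given $(\phi(X),B_I)$ reduce to unconditional means.
\begin{itemize}
\item For $e_I$: $\bbE[\etax-f\mid\phi,B_I]=0$, so choosing $c_I=0$ gives $e_I=0$. With $\alpha=0$ the penalty term also vanishes, so $\chi_I=0$ and hence $\chi_I\psi_I=0$. Note that $\psi_I=|p_g-\nicefrac12|$ may be nonzero, which is irrelevant here.
\item For $\lambda_I$: $\lambda_I^2=\mathrm{Var}(\etax)=t^2p_g(1-p_g)$.
\item For $\varphi_I$: $\varphi_I^2=\mathrm{Var}(g(X))=p_g(1-p_g)$.
\end{itemize}
Therefore $\lambda_I\varphi_I=t\,p_g(1-p_g)=\rho$.

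Finally I compute the bias:
\[
\bbE[Y-f\mid g=1,B_I]=\bbE[\etax-\nicefrac12\mid g=1]=t(1-p_g)=\frac{\rho}{p_g}=\frac{\lambda_I\varphi_I}{p_g}.
\]
Taking the maximum over the relevant bands, which all give the same value, yields $\operatorname{bias}_{g,\calI}(f)=\rho/p_g$. This matches the bound of \cref{thm:residual_alignment} with $\alpha=0$ and $\chi_I\psi_I=0$ term by term, because $\bbP(B_I)=1$ and $\bbP(g=1,B_I)=p_g$. So the bound is tight and the $\lambda_I\varphi_I$ term cannot be dropped.

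The main obstacle is only bookkeeping. One must make sure the $\chi_I$ term vanishes through $e_I=0$ and $\alpha=0$, rather than requiring $\psi_I=0$. One must also handle bands not containing $\nicefrac12$: there $\bbP(B_I)=0$, so they are excluded both from the bias and from the theorem's conditions.
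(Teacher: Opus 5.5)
Your proposal is correct and follows the paper's construction essentially verbatim. You use a constant representation, $f\equiv\nicefrac12$, and $\bbE[Y\mid X]=\tfrac12+\frac{\rho}{p_g(1-p_g)}\bigl(g(X)-p_g\bigr)$, and you compute $\lambda_I$, $\varphi_I$, $e_I=0$ and the bias $\rho/p_g$ the same way; the paper simply instantiates $X\sim\mathrm{Bernoulli}(p_g)$ with $g(X)=X$. Your extra observations are small but welcome tightenings of the paper's bookkeeping: the residual is orthogonal to every $h'\in\calH$, not just $h\equiv\nicefrac12$, and $\chi_I$ vanishes because both $e_I=0$ and $\alpha=0$.
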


\begin{proof}
We provide the proof in~\cref{app:new_proof_prop_tight}. 
\end{proof}

Consequently, the term $\lambda_I\varphi_I$ cannot in general be removed from the bound of~\cref{thm:residual_alignment}. However, when a documented bias such as position, verbosity, or family is known, its group indicator can be supplied directly to the multicalibrator. Augmenting the representation with $g(X)$ makes the group exactly identifiable and eliminates both group-side approximation terms, $\varphi_I$ and $\psi_I$.
\begin{corollary:explicit_bias}[What naming a bias is worth]
\label{cor:explicit_bias}
Let $g\in\calG$ be a bias group and consider
\(
\tilde\phi(X)=[\phi(X),g(X)].
\)
Assume that $\calH$ contains the coordinate projection
$h_g(\tilde\phi(X))=g(X)$. Then, under the setting and assumptions of
\cref{thm:residual_alignment},
\(
\varphi_I=\psi_I=0
\) for all $I \in \calI$
and therefore:
\[
\operatorname{bias}_{g,\calI}(f)
\le
\max_{I\in\calI}
\frac{\alpha\tau_g(f)}{\bbP(g(X)=1,B_I)} \notag
\]
\end{corollary:explicit_bias}
\begin{proof}
We provide the proof in~\cref{app:explicit}. 
\end{proof}

The corollary isolates \emph{what naming a bias is worth}: once $g(X)$ is provided explicitly, both interaction terms in~\cref{thm:residual_alignment} vanish, leaving only the multicalibration term. Thus, explicit bias specification removes the representation-induced approximation error for that group while preserving the broader protection afforded by the learned
representation.

\subsection{Predictive Performance Guarantees for Multicalibrated LLM-Judges}
\label{subsec:perf_guarantees}
Multicalibration should not come at the expense of predictive performance. 
Decision-theoretic analyses of multicalibrated predictors have previously established general guarantees on downstream decision quality~\citep{baldeschi2025multicalibration}. In the binary classification setting considered here, we derive an explicit accuracy-improvement bound that separates the gain induced by changed decisions from the residual slack permitted by approximate multicalibration.
Our analysis begins denoting the two sets of instances whose decision \emph{changes} after multicalibration with
\(
  \calC^\uparrow \coloneqq \{\mbx : f_0(\mbx) < \tfrac12 \le f^\star(\mbx)\}
\)
and
\(
  \calC^\downarrow \coloneqq \{\mbx : f^\star(\mbx) < \tfrac12 \le f_0(\mbx)\}. 
\)
We denote with $\calC := \calC^\uparrow \cup \calC^\downarrow$ their union. This is the subset of instances such that the two judges $f_0$ and $f^\star$ disagree. 

Any result on the net accuracy change between the two judges is restricted to this subset of instances. 
We decompose the predictive performance variation within $\calC$ in two interpretable terms. 
The first term ($\Gamma$) is the predicted-utility advantage of the decision rule induced by $f^\star$ over that induced by $f_0$. It equals the decision margin of instances whose confidences cross the threshold. The second term ($\zeta$) instead accounts for the residual discrepancy between predicted and true utility permitted by approximate multicalibration.
Whenever $\Gamma > \zeta$, the net accuracy change is strictly positive. 
\begin{theorem}[Accuracy improvement under multicalibration]
\label{thm:acc-improve}
Suppose that $f^\star$ is $(\calG^\star,\calI,\alpha)$-multicalibrated,
where $\calI$ contains the intervals $[0,\frac12)$ and
$[\frac12,1]$. Let
\(
g_-(X)=\indicator{f_0(X)<\tfrac12}
\) and 
\(
g_+(X)=\indicator{f_0(X)\ge\tfrac12}
\) denote the two groups induced by the base judge's hard prediction and let $\{g_-,g_+\}\subseteq\calG^\star$.
Let $\tau^\uparrow$ and $\tau^\downarrow$ denote the corresponding
multicalibration scale parameters. Then:
\[
  \mathrm{Acc}(f^\star) - \mathrm{Acc}(f_0) \;\ge\; \underbrace{\bbE\Biggl[\,\Bigl| 2f^\star(X) - 1\Bigr|\;\indicator{X\in \calC}\Biggr]}_{\Gamma} - \underbrace{2\,\alpha\,\Biggl(\tau^\uparrow + \tau^\downarrow \Biggr)}_{\zeta} \notag
\]
\end{theorem}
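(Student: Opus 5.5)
The plan is to reduce the accuracy difference to the disagreement set $\calC$ and then express the gain on each of $\calC^\uparrow$ and $\calC^\downarrow$ as a single multicalibration cell. On $\calC^c$ the hard decisions $\indicator{f_0(X)\ge\nicefrac12}$ and $\indicator{f^\star(X)\ge\nicefrac12}$ coincide, so those instances contribute nothing to $\mathrm{Acc}(f^\star)-\mathrm{Acc}(f_0)$. On $\calC^\uparrow$ the calibrated judge predicts $1$ and the base judge predicts $0$, so the contribution is $\bbP(Y=1,X\in\calC^\uparrow)-\bbP(Y=0,X\in\calC^\uparrow)$. Symmetrically, on $\calC^\downarrow$ the contribution is $\bbP(Y=0,X\in\calC^\downarrow)-\bbP(Y=1,X\in\calC^\downarrow)$. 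This gives the exact identity
\[
\mathrm{Acc}(f^\star)-\mathrm{Acc}(f_0)=\bbE\bigl[(2Y-1)\indicator{X\in\calC^\uparrow}\bigr]+\bbE\bigl[(1-2Y)\indicator{X\in\calC^\downarrow}\bigr].
\]

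Next I will observe that the two disagreement sets are exactly group–band cells of the multicalibration condition:
\[
\indicator{X\in\calC^\uparrow}=g_-(X)\,\indicator{f^\star(X)\in[\tfrac12,1]},\qquad
\indicator{X\in\calC^\downarrow}=g_+(X)\,\indicator{f^\star(X)\in[0,\tfrac12)}.
\]
Both intervals belong to $\calI$, and both $g_-$ and $g_+$ belong to $\calG^\star$. Hence the $(\calG^\star,\calI,\alpha)$-multicalibration of $f^\star$ (Definition~\ref{def:multicalibration}) gives
\[
\bigl|\bbE[(Y-f^\star(X))\indicator{X\in\calC^\uparrow}]\bigr|\le\alpha\tau^\uparrow,
\qquad
\bigl|\bbE[(Y-f^\star(X))\indicator{X\in\calC^\downarrow}]\bigr|\le\alpha\tau^\downarrow,
\]
with $\tau^\uparrow=\tau_{g_-}(f^\star)$ and $\tau^\downarrow=\tau_{g_+}(f^\star)$. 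I read these as the "corresponding scale parameters" named in the statement.

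I will then add and subtract $f^\star$ inside each term. For the first term,
\[
\bbE[(2Y-1)\indicator{\calC^\uparrow}]=\bbE[(2f^\star-1)\indicator{\calC^\uparrow}]+2\,\bbE[(Y-f^\star)\indicator{\calC^\uparrow}].
\]
On $\calC^\uparrow$ we have $f^\star\ge\nicefrac12$, so $2f^\star-1=|2f^\star-1|$, and the second summand is at least $-2\alpha\tau^\uparrow$. For the second term,
\[
\bbE[(1-2Y)\indicator{\calC^\downarrow}]=\bbE[(1-2f^\star)\indicator{\calC^\downarrow}]-2\,\bbE[(Y-f^\star)\indicator{\calC^\downarrow}].
\]
On $\calC^\downarrow$ we have $f^\star<\nicefrac12$, so $1-2f^\star=|2f^\star-1|$, and the residual summand is at least $-2\alpha\tau^\downarrow$. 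Summing the two lower bounds, and using that $\calC^\uparrow$ and $\calC^\downarrow$ are disjoint, yields $\Gamma-\zeta$.

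The main obstacle is the second step: checking that the disagreement regions coincide exactly with cells $g\cdot\indicator{f^\star\in I}$ that are covered by the hypothesis. This is why $g_\pm$ are required to lie in $\calG^\star$ and the two half-intervals in $\calI$. It is also why the boundary convention $f\ge\nicefrac12$ must match the closed/open endpoints of $[\nicefrac12,1]$ and $[0,\nicefrac12)$. Everything else is bookkeeping with signs. Note that no absolute value is lost, because the sign of $2f^\star-1$ is fixed on each region.
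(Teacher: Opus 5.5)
Your proof is correct and follows essentially the same route as the paper's. It uses the same exact decomposition of $\mathrm{Acc}(f^\star)-\mathrm{Acc}(f_0)$ over $\calC^\uparrow$ and $\calC^\downarrow$, the same identification of $\indicator{X\in\calC^\uparrow}=g_-(X)\,\indicator{f^\star(X)\ge\frac12}$ and $\indicator{X\in\calC^\downarrow}=g_+(X)\,\indicator{f^\star(X)<\frac12}$ as admissible multicalibration cells, and the same add-and-subtract of $f^\star$ with the sign of $2f^\star-1$ fixed on each region. The only difference is cosmetic: you work with unconditional expectations throughout, whereas the paper passes through the conditional flip rates $\bbP(Y=1\mid X\in\calC^\uparrow)$ and $\bbP(Y=1\mid X\in\calC^\downarrow)$, which implicitly requires $\bbP(X\in\calC^\uparrow)>0$ and $\bbP(X\in\calC^\downarrow)>0$; your version avoids that assumption.
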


\begin{proof}
We provide proof in~\cref{app:proof_thm2}.
\end{proof}
The bound reads as a contest between two quantities: the magnitude of the confidence margin for new decisions ($\Gamma$), against how much room multicalibration leaves it to be wrong ($\zeta$). Accuracy strictly improves whenever the first exceeds the second, and, if $\Gamma > 0$, in the perfectly multicalibrated limit where $\alpha \to 0$ \emph{any} decision change is a strict gain.

\section{Experimental Evaluation}
\label{sec:experiments}

In the following, we address three research questions:
\begin{itemize}
    \item \textbf{Q1.} Does \method reduce the biases of the LLM-as-a-judge over existing mitigation strategies without having direct access to the bias groups?
\item \textbf{Q2.} Does \method's bias mitigation further improve when given access to the bias groups?
     \item \textbf{Q3.} Does \method improve over the baselines' predictive performance metrics?
\end{itemize}
The code\footnote{Code fully developed by the authors from KU Leuven and University of Trento.} can be found at~\url{https://github.com/stralu9/CaLLM}.

\subsection{Experimental Settings}
\textbf{Datasets.} 
We consider five pairwise evaluation datasets from the literature. \texttt{RewardBench} (\texttt{RWB}) consists of $3$k prompt-chosen-rejected trios~\citep{DBLP:conf/naacl/LambertPMMLCDKZCSH25}. \texttt{MTBench} (\texttt{MTB}) consists of 709 pairwise comparisons over 80 questions ~\citep{zheng2023judging}. 
Each comparison has LLM-generated answers that are judged pairwise by one or multiple human experts.
\texttt{ArenaExpl} (\texttt{AEX}) is a corpus of $53.8$k pairwise comparisons extracted from conversational data~\citep{DBLP:conf/icml/chatarena}. Here users write the prompt, receive two LLM-generated answers, and act as the judge by expressing a preference. 
Similarly, \texttt{PPE} contains $10.2$k prompts in different languages, with users casting a preference vote for one of two options generated by two LLMs~\citep{DBLP:conf/iclr/Frick25}. 
Finally, \texttt{PKU} consists of $73.9$k pairwise comparisons over safety-oriented prompts. These combine human demonstrations with LLMs~\citep{DBLP:conf/acl/JiPKU}. 
We use the more helpful answer as ground truth. 
More details in \cref{app:data}.

\paragraph{Implementation details.} We rely on \mcgrad~\citep{DBLP:conf/kdd/TaxPLHKOVA26} as the default multicalibrator for \method, because it can automatically search the supplied feature space and thereby uncover undocumented, latent sources of bias. We ablate the choice of multicalibrator and compare with popular calibration techniques in~\cref{app:ablate_multi}.  
To exploit \mcgrad's ability to discover groups automatically, we encode both candidate responses using the open-source \texttt{Muse-Glimmer-30B}~\citep{DBLP:journals/corr/abs-2606-12429}. We provide results with \texttt{Qwen3-0.6B}~\citep{DBLP:journals/corr/abs-2505-09388} in~\cref{app:embedder_ablation}. We denote the resulting embeddings with \(e_1, e_2 \in \mathbb{R}^d\). 
Following previous work, we represent each comparison through the Principal Component Analysis (PCA) of the difference embedding~\citep{DBLP:journals/corr/abs-2606-19057}, denoted as
\( 
E(X) = {\mathrm{PCA}}(e_1-e_2 ).
\) 
We then feed $\phi(X)=[E(X), f_0(X)]$ to \mcgrad.
The former suppresses components shared by the two responses and emphasizes their relative attributes, which are more directly aligned with the structure of pairwise biases. The latter allows to discover miscalibration concealed within confidence bands $\calI$. We provide further details in~\cref{app:impl_details}.

\paragraph{Baselines.} We compare \method against five baselines: $(i)$ the raw verbalized confidence $(f_0)$, which serves as our reference level; $(ii)$
\texttt{BPC}~\citep{wang2024large}, which queries the judge with both response orderings and averages the two confidences; $(iii)$
\texttt{PORTIA}~\citep{li2024split}, which interleaves the two responses in a single prompt such that neither of them occupies a privileged position; $(iv)$ 
\texttt{CalibraEval}~\citep{li2025calibraeval}, which queries the judge under swapped orderings and learns a non-parametric, order-preserving mapping that removes the preference for one position; 
$(v)$ \texttt{LenControl}~\citep{dubois2024length}, which models the judge's preference as a function of response-length difference and then predicts as if both responses had equal length.

\paragraph{Metrics.}
For \textbf{Q1} and \textbf{Q2}, we report the maximum error among the groups induced by the bias-inducing attribute $a$, which can be seen as a per-documented-bias Multicalibration Error ($\text{MCE}_{\%}$). We report $\text{MCE}_{\sigma}$ in~\cref{app:detailed_results}. We compute and report this quantity separately for each documented bias. 
For \textit{verbosity} and \textit{family} biases, we set $\pi$ as the original ordering in which the candidate responses are presented to the judge. This is not suitable for \textit{position} bias, since every comparison necessarily contains a first and a second response. Thus, we define $\pi$ using the alphabetical ordering of the responses source-model names, enabling meaningful position-based groups. 
Following prior work~\citep{DBLP:conf/kdd/TaxPLHKOVA26}, we report the $\text{MCE}_{\%}$ over random partitions of the representation of $X$ in~\cref{app:unspec}. This shows if the multicalibrator can uncover and mitigate errors on unspecified groups.
For \textbf{Q3}, we report accuracy ($\text{ACC}$) together with the log-loss ($\mathcal L_{\text{log}}$) of the judges. For completeness, we report the full results for \textbf{Q1}, \textbf{Q2}, and \textbf{Q3} in~\cref{app:detailed_results}.

\paragraph{Setup.} 
We use \texttt{Muse-Glimmer-30B}, a recent open-weight causal language model that achieves competitive performance among models of comparable size across a broad range of reasoning and agentic benchmarks~\citep{meta_muse_glimmer_2026}, as a judge. We ablate the judge choice with \texttt{Qwen} \texttt{2.5-7B}~\citep{qwen2.5} in~\cref{app:judge_ablation}. Each instance $X=(Q, R_1, R_2)$ is judged once, extracting the judge's verbalized confidence~\citep{tian2023just} $f_0(X)$ that $R_1$ is the preferred answer. This choice allows our framework to extend to the black-box setting, where one lacks access to internal model weights. We then either multicalibrate the base scores $f_0(X)$ or apply a baseline method. 
Concretely, we perform 5-fold cross validation, making sure that different answer pairs for the same prompt always belong to the same fold.

\subsection{Experimental Results}

\begin{figure}[t]
    \centering
\includegraphics[width=\linewidth]{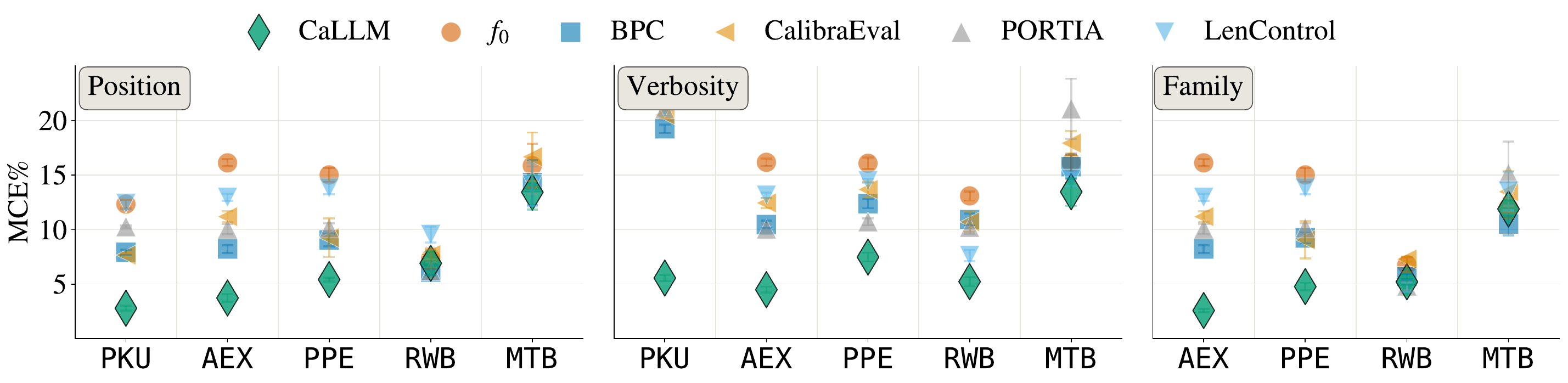}
    \caption{Average per-axis $\text{MCE}_\%$ ($\pm$ s.e.) for each method across datasets. \texttt{PKU} doesn't contain the information regarding the LLMs that generated the responses, so we do not report \textit{Family} bias. }
    \label{fig:Q1}
\end{figure}
\textbf{Q1: \method shows no significant miscalibration in 12 out of 14 evaluations.} We analyze whether \method reduces the \textit{Position}, \textit{Verbosity} and \textit{Family} biases. \cref{fig:Q1} reports the average per-group $MCE_\%$ across datasets 
(see~\cref{tab:APP_q1_MCE_perc_and_sigma} for full results). We additionally validate our choice of \mcgrad as main multicalibration method and compare it with global calibration approaches that also leverage labelled calibration data in~\cref{app:ablate_multi}. \method outperforms global calibration methods on all 14 dataset-groups pairs, and \mcgrad is the best multicalibration technique on 11 of 14 of them. In~\cref{app:recover}, we assess whether \method can recover groups from the available representation.

We observe that \method achieves the lowest error in 11 out of 14 datasets and is the only method that substantially reduces all three biases, with a median reduction with respect to the uncalibrated model $f_0$ of $\approx64\%$, $\approx60\%$, and $\approx46\%$ for position, verbosity, and family, vs. at most $\approx36\%$, $\approx22\%$, and $\approx31\%$ for the other baselines. Interestingly, we observe the largest gains on larger datasets: \eg on \texttt{PKU}, \method achieves $\text{MCE}_\%\approx2.8\pm.2$ for \textit{Position} (second-best \texttt{CalibraEval} with $\text{MCE}_\%\approx7.7\pm.2$) and $\text{MCE}_\%\approx5.6\pm.3$  for \textit{Verbosity} (second-best \texttt{BPC} with $\text{MCE}_\%\approx19.2\pm.4$). Conversely, the three exceptions occur on the two smallest datasets: on \texttt{RWB}, where $f_0$ shows no strong evidence of \textit{Position} bias ($\mathrm{MCE}_\sigma=3.4$), and two of the three position-specific baselines (\texttt{BPC} and \texttt{PORTIA}) outperform \method on position. Notably, explicitly providing CaLLM with the position groups reverses this ranking (see \textbf{Q2}); for \textit{Family} bias,  \method is within one standard error of the best method on both \texttt{RWB} and \texttt{MTB}.
Notably, \method leaves no strong evidence of miscalibration ($\mathrm{MCE}_\sigma<4.71$)
\footnote{$\mathrm{MCE}_\sigma=4.71$ equals a conservative p-value of $10^{-5}$ under the null hypothesis of perfect calibration.} in 12 out of 14 dataset pairs (reference level was 36\% for $f_0$), with the only exceptions occurring for \textit{Verbosity} on \texttt{PKU} and \texttt{AEX}, the two largest datasets, where even small deviations are detectable (see \Cref{tab:APP_q1_MCE_perc_and_sigma}).

\begin{table}[t]
\normalsize
\setlength{\tabcolsep}{4pt}
    \centering
    \caption{Comparison between \method and \methodG{} in terms of $MCE_{\%}$ for documented biases. \texttt{PKU} doesn't contain the information regarding the LLMs that generated the responses, so we do not report \textit{Family} bias.}
    \scalebox{0.7}{ 
\begin{tabular}{l|cc|cc|cc|cc|cc}
\toprule
 & \multicolumn{2}{c|}{PKU} & \multicolumn{2}{c|}{AEX} & \multicolumn{2}{c|}{PPE} & \multicolumn{2}{c|}{RWB} & \multicolumn{2}{c}{MTB} \\
 & \method & \method$+g$ & \method & \method$+g$ & \method & \method$+g$ & \method & \method$+g$ & \method & \method$+g$ \\
\midrule
\textit{Position} & 2.8 $\pm$ 0.2 & \textbf{2.6 $\pm$ 0.1} & 3.7 $\pm$ 0.3 & \textbf{2.5 $\pm$ 0.3} & 5.4 $\pm$ 0.2 & \textbf{3.7 $\pm$ 0.3} & 6.9 $\pm$ 1.1 & \textbf{4.7 $\pm$ 0.5} & 13.4 $\pm$ 1.6 & \textbf{12.5 $\pm$ 1.5} \\
\textit{Verbosity} & 5.6 $\pm$ 0.3 & \textbf{2.8 $\pm$ 0.2} & 4.5 $\pm$ 0.3 & \textbf{2.5 $\pm$ 0.2} & 7.5 $\pm$ 0.4 & \textbf{4.0 $\pm$ 0.2} & 5.2 $\pm$ 0.4 & \textbf{4.7 $\pm$ 0.4} & 13.5 $\pm$ 1.3 & \textbf{12.3 $\pm$ 0.9} \\
\textit{Family} & -- & -- & 2.6 $\pm$ 0.2 & \textbf{2.5 $\pm$ 0.2} & 4.8 $\pm$ 0.3 & \textbf{4.6 $\pm$ 0.4} & 5.2 $\pm$ 0.3 & \textbf{4.6 $\pm$ 0.1} & 11.9 $\pm$ 0.8 & \textbf{11.6 $\pm$ 0.2} \\

\bottomrule
\end{tabular}}
    \label{tab:Q2}
\end{table}
\begin{table}[t]
    \caption{Average ACC and log-loss $\mathcal{L}_{\log}$ ($\pm$ s.e.) across datasets. \method always matches or improves both metrics over all baselines, showing that multicalibration also improves predictive performance.}
\setlength{\tabcolsep}{4pt}
\normalsize
    \centering
    \scalebox{0.7}{
    \begin{tabular}{l|cc|cc|cc|cc|cc}
\toprule
 & \multicolumn{2}{c|}{PKU} & \multicolumn{2}{c|}{AEX} & \multicolumn{2}{c|}{PPE} & \multicolumn{2}{c|}{RWB} & \multicolumn{2}{c}{MTB} \\
 & ACC & $\mathcal{L}_{\log}$ & ACC & $\mathcal{L}_{\log}$ & ACC & $\mathcal{L}_{\log}$ & ACC & $\mathcal{L}_{\log}$ & ACC & $\mathcal{L}_{\log}$ \\
\midrule
\method & \textbf{.69 $\pm$ .01} & \textbf{.57 $\pm$ .01} & \textbf{.68 $\pm$ .01} & \textbf{.60 $\pm$ .01} & \textbf{.67 $\pm$ .01} & \textbf{.61 $\pm$ .01} & \textbf{.93 $\pm$ .01} & \textbf{.21 $\pm$ .01} & \textbf{.81 $\pm$ .01} & \textbf{.41 $\pm$ .02} \\
\midrule
$f_0$ & .63 $\pm$ .01 & .73 $\pm$ .01 & .63 $\pm$ .01 & .76 $\pm$ .01 & .64 $\pm$ .01 & .74 $\pm$ .01 & .86 $\pm$ .01 & .36 $\pm$ .02 & .79 $\pm$ .01 & .49 $\pm$ .02 \\
\texttt{BPC} & .64 $\pm$ .01 & .67 $\pm$ .01 & .65 $\pm$ .01 & .69 $\pm$ .01 & .65 $\pm$ .01 & .68 $\pm$ .01 & .88 $\pm$ .01 & .29 $\pm$ .01 & .80 $\pm$ .01 & .43 $\pm$ .02 \\
\texttt{CalibraEval} & .64 $\pm$ .01 & .65 $\pm$ .01 & .65 $\pm$ .01 & .69 $\pm$ .01 & .65 $\pm$ .01 & .66 $\pm$ .01 & .88 $\pm$ .01 & .30 $\pm$ .01 & .81 $\pm$ .01 & .43 $\pm$ .02 \\
\texttt{PORTIA} & .61 $\pm$ .01 & .72 $\pm$ .01 & .65 $\pm$ .01 & .72 $\pm$ .01 & .66 $\pm$ .01 & .70 $\pm$ .01 & .87 $\pm$ .01 & .31 $\pm$ .02 & .77 $\pm$ .02 & .55 $\pm$ .05 \\
\texttt{LenControl} & .64 $\pm$ .01 & .67 $\pm$ .01 & .65 $\pm$ .01 & .69 $\pm$ .01 & .65 $\pm$ .01 & .68 $\pm$ .01 & .88 $\pm$ .01 & .31 $\pm$ .01 & .80 $\pm$ .01 & .43 $\pm$ .02 \\
\bottomrule
\end{tabular}
}
    \label{tab:q3}
\end{table}
\textbf{Q2: Naming the groups always lowers the per-bias $\mathrm{MCE}_\%$.} We quantify \textit{what naming a bias is worth} by comparing \method with an augmented version (\methodG{}) that adds to $\phi(X)$ the group indicators resulting from the \textit{Position}, \textit{Verbosity} and \textit{Family} biases (see ~\cref{tab:Q2}).

More precisely, we can see that for \textit{Position} bias, \methodG{} further reduces $\text{MCE}_\%$ on all five datasets, with the largest gains on \texttt{AEX} ($\text{MCE}_\%$ passes from $\approx3.7\pm0.3$ to $\approx2.5\pm0.3$, a drop of $\approx33\%$ ) and \texttt{RWB} ($\text{MCE}_\%$ passes from $\approx6.9\pm1.1$ to $\approx4.7\pm0.5$, a drop of $\approx32\%$ ).

For \textit{Verbosity} bias, \methodG{} exhibits even stronger gains, with an average reduction of $\approx 32\%$ across the five datasets. In particular, we observe the largest improvements on \texttt{PKU} ($\approx49.4\%$), \texttt{PPE} ($\approx46.7\%$) and \texttt{AEX} ($\approx 43.2\%$), the three largest datasets of our collection.
For \textit{Family} bias, we observe smaller improvements, likely due to the fact that \method already performed quite well across all datasets. Still, we can observe some gains on smaller datasets: \eg on \texttt{RWB} $\text{MCE}_\%$ passes from $\approx 5.2\pm0.3$ to $\approx 4.6\pm 0.1$. Overall, \methodG{} completely removes evidence of miscalibration on all datasets by reducing the $\mathrm{MCE}_\sigma$ to $<4.71$ (compared to 12/14 for \method, see \cref{tab:APP_q2_MCE_perc_and_sigma}).

\textbf{Q3: A \method judge is also a better judge.} \Cref{tab:q3} displays the results in terms of predictive performance, reporting both $\text{ACC}$ and $\mathcal L_{\text{log}}$ for all the methods. We can see that \method achieves the highest accuracy on four out of five datasets (matching the best baseline on the fifth) and the lowest log-loss on all five datasets.

Compared with $f_0$, $\text{ACC}$ increases by $\approx4.6$ percentage points on average, while $\mathcal L_{\text{log}}$ decreases by $\approx24\%$. Consistently with previous results, largest gains are obtained on largest data: on \texttt{PKU}, \method passes from $\approx.63\pm.01$ to $\approx.69\pm.01$, a $\approx9.8\%$ increase; on \texttt{AEX}, \method reaches $\approx .68\pm.01$ starting from a base model of $\approx.63$, an improvement of $\approx7\%$. We observe similar patterns also for $\mathcal L_{\text{log}}$, with \method consistently outperforming competitors. The largest gain is on \texttt{RWB}, where $L_{\text{log}}$ drops by $42\%$, passing from the base level of $\approx.36 \pm .02$ to $\approx.21\pm.01$.
Overall, these results suggest that \method effectively enforces predictive performance gains.

\section{Related Work}
\label{sec:related}

\textbf{LLM judges and bias mitigation.} LLM judges provide a scalable alternative to human evaluation through pointwise, pairwise, and reference-guided protocols~\citep{DBLP:conf/emnlp/LiuIXWXZ23,DBLP:conf/eacl/LiusieMG24}. Their judgments, however, are affected by position, verbosity, model family, and other incidental attributes~\citep{wang2024large,panickssery2024llm,koo2024benchmarking,chen2024humans,ye2025justice}. Existing mitigation strategies modify inference through response swapping, interleaving, panels, or decomposed criteria~\citep{wang2024large,li2024split,verga2024replacing,chan2024chateval,yang2026quantifying}; fine-tune evaluators~\citep{kim2024prometheus,vu2024foundational,zhu2025judgelm}; or post-process confidence scores~\citep{dubois2024length,li2025calibraeval}. These methods target predefined biases and may require additional inference or annotated data~\citep{li2025generation}. Conversely, \method is post-hoc and controls any bias whose error pattern is recoverable from the supplied representation.

\textbf{Calibration.} 
Classical parametric and non-parametric calibration methods learn a global correction~\citep{DBLP:conf/aistats/VaicenaviciusWA19,platt1999probabilistic,DBLP:conf/icml/GuoPSW17,kull2017beta,DBLP:conf/kdd/ZadroznyE02,DBLP:conf/icml/ZadroznyE01}, with extensions to multiclass scores~\citep{DBLP:conf/nips/KullPKFSF19,DBLP:journals/corr/abs-2511-03685}. Local calibration refines this incorporating a notion of distance between inputs~\citep{DBLP:conf/uai/LuoBBZWXSESP22,barbera26,DBLP:conf/iclr/LinT023,DBLP:journals/tmlr/ValkK23,barbera2026divide}. Multicalibration instead requires calibration simultaneously across groups identified by a hypothesis class~\citep{hebert2018multicalibration,jung2021moment,baldeschi2025multicalibration,jin2025discretization,DBLP:conf/kdd/TaxPLHKOVA26}. This group-wise formulation matches how LLM-judge biases arise and allows \method to provide guarantees beyond a predefined list of biases~\citep{globus2023multicalibrated,noarov2023statistical}.

\section{Conclusion}

We tackled bias in LLM-as-a-judge by casting it as a multicalibration problem. We introduced \method, a post-hoc framework that uses the representation of each comparison to identify and correct groups with systematic residual error. We showed that \method bounds every bias group whose residual structure is recoverable from the representation and that explicitly providing a documented group further improves the corresponding guarantees. We showed that multicalibration improves accuracy when changed-decision margins exceed residual slack, which is always true under perfect multicalibration. Across five benchmark data, \method reduces bias on documented biases while improving accuracy and log loss over the raw judge and bias-specific baselines.

\textbf{Limitations and future work.}
Our guarantees require the pattern of target biases to be recoverable from functions of the chosen representation. Biases whose residual structure is not captured may therefore remain uncontrolled. Moreover, our guarantees are population-level and become weaker for rare groups, where calibration errors are also harder to estimate from finite data. 
Our notion of bias is distributional: \method corrects systematic group-wise discrepancies but does not identify whether the corresponding attributes causally influence the judge, suggesting a complementary connection with counterfactual notions of bias. Finally, we focus on binary pairwise evaluation. We leave extensions to pointwise scoring, multi-way comparisons, and ranking-based LLM evaluation as future work.

\section*{Acknowledments}
Luca Stradiotti and Jesse Davis were supported by the Flemish Government through the “Onderzoeksprogramma Artificiële Intelligentie (AI) Vlaanderen” programme. Jesse Davis was also supported by the KU Leuven Research Fund (C14/24/091). Lorenzo Cascioli was supported by the Research Foundation Flanders (FWO, 11I8125N). 
Cesare Barbera, Andrea Pugnana and Andrea Passerini acknowledge the CINECA award under the ISCRA initiative for the availability of high-performance computing resources and support. This work was funded by the European Union. The views and opinions expressed are however those of the author(s) only and do not necessarily reflect those of the European Union, the European Health and Digital Executive Agency (HaDEA) or the European Research Executive Agency. Neither the European Union nor the granting authority can be held responsible for them. Grant Agreement no. 101120763 - TANGO.

\section*{AI use statement}
We used generative AI tools to review and improve the writing and presentation of the mathematical proofs, provide feedback on the experimental methodology, and assist with the implementation of the experiment code. We also used these tools to help prepare Figure~1. We reviewed all AI-assisted text, proofs, code, and figures, verified the resulting arguments and experimental outputs, and take full responsibility for the final content of this work.

\section*{Ethics statement}
This work aims to make LLM-based evaluation less sensitive to systematic biases. However, lower group-wise calibration error should not be interpreted as a guarantee of fairness: \method may not protect groups that are absent from the calibration data or whose error structure cannot be recovered from the chosen representation, and it inherits limitations in the human preference labels used for calibration. The method should therefore complement, rather than replace, human oversight in high-stakes settings.

\section*{Reproducibility statement}
The main text describes the datasets, judge, confidence-elicitation procedure, baselines, metrics, representation, and five-fold evaluation protocol used in our experiments. Complete proofs are provided in the appendix, together with additional experimental results. The implementation and experiment scripts used to produce our results are attached with the submission.

\bibliography{bibliography}
\bibliographystyle{unsrtnat}

\appendix

\crefname{appendix}{Appendix}{Appendices}
\Crefname{appendix}{Appendix}{Appendices}

\section{Proofs}
\label[appendix]{app:proofs}

\subsection{Proof of Theorem 1}
\label[appendix]{app:proof_thm1}

\residual*

\begin{proof}
Fix $I\in\calI$ such that
\(
p_{g,I}:=\bbP(g(X)=1,f(X)\in I)>0,
\)
and let
\(
B_I:=\{f(X)\in I\}.
\)
For compactness, define
\[
\etaxp:=\etax,
\]
and, within the band $B_I$,
\[
\mu_I(\phi(X))
:=
\bbE[\etaxp-f(X)\mid \phi(X),B_I],
\qquad
\gamma_I(\phi(X))
:=
\bbE[g(X)\mid \phi(X),B_I].
\]

Since $g(X)$, $f(X)$, and $\indicator{B_I}$ are $X$-measurable,
the tower property gives
\[
\bbE\!\left[
(Y-f(X))g(X)\indicator{B_I}
\right]
=
\bbE\!\left[
(\etaxp-f(X))g(X)\indicator{B_I}
\right].
\]
Similarly, since $h(\phi(X))$ is $X$-measurable,
\[
\bbE\!\left[
(Y-f(X))h(\phi(X))\indicator{B_I}
\right]
=
\bbE\!\left[
(\etaxp-f(X))h(\phi(X))\indicator{B_I}
\right].
\]
Hence,
\begin{align*}
& \bbE\!\left[
(Y-f(X))
\bigl(g(X)-h(\phi(X))\bigr)
\indicator{B_I}
\right] =
\bbE\!\left[
(\etaxp-f(X))
\bigl(g(X)-h(\phi(X))\bigr)
\indicator{B_I}
\right].
\end{align*}

Conditioning on $B_I$ and adding and subtracting
$\mu_I(\phi(X))$ and $\gamma_I(\phi(X))$, we obtain
\begin{align*}
&
\bbE\!\left[
(\etaxp-f(X))
\bigl(g(X)-h(\phi(X))\bigr)
\mid B_I
\right]\\
&=
\bbE\!\left[
\bigl(\etaxp-f(X)-\mu_I(\phi(X))\bigr)
\bigl(g(X)-\gamma_I(\phi(X))\bigr)
\mid B_I
\right]
\\
&+
\bbE\!\left[
\mu_I(\phi(X))
\bigl(\gamma_I(\phi(X))-h(\phi(X))\bigr)
\mid B_I
\right].
\end{align*}
Indeed, the remaining cross terms vanish by the tower property. First,
\begin{align*}
&
\bbE\!\left[
\bigl(\etaxp-f(X)-\mu_I(\phi(X))\bigr)
\bigl(\gamma_I(\phi(X))-h(\phi(X))\bigr)
\mid B_I
\right]
\\
&=
\bbE\!\left[
\bigl(\gamma_I(\phi(X))-h(\phi(X))\bigr)
\bbE\!\left[
\etaxp-f(X)-\mu_I(\phi(X))
\mid \phi(X),B_I
\right]
\Bigm| B_I
\right]
=0,
\end{align*}
because
\[
\mu_I(\phi(X))
=
\bbE[\etaxp-f(X)\mid\phi(X),B_I].
\]
Likewise,
\begin{align*}
&
\bbE\!\left[
\mu_I(\phi(X))
\bigl(g(X)-\gamma_I(\phi(X))\bigr)
\mid B_I
\right]
\\
&=
\bbE\!\left[
\mu_I(\phi(X))
\bbE\!\left[
g(X)-\gamma_I(\phi(X))
\mid \phi(X),B_I
\right]
\Bigm| B_I
\right]
=0.
\end{align*}
because
\[
\gamma_I(\phi(X))
=
\bbE[g(X)\mid\phi(X),B_I].
\]
Multiplying by $\bbP(B_I)$ therefore yields
\begin{align*}
&
\bbE\!\left[
(Y-f(X))
\bigl(g(X)-h(\phi(X))\bigr)
\indicator{B_I}
\right]
\\
&=
\bbP(B_I)
\bbE\!\left[
\bigl(\etaxp-f(X)-\mu_I(\phi(X))\bigr)
\bigl(g(X)-\gamma_I(\phi(X))\bigr)
\mid B_I
\right]
\\
&\quad+
\bbP(B_I)
\bbE\!\left[
\mu_I(\phi(X))
\bigl(\gamma_I(\phi(X))-h(\phi(X))\bigr)
\mid B_I
\right].
\end{align*}
Apply the triangle inequality:
\begin{align*}
&
\Biggl|
\bbE\!\Bigl[
(Y-f(X))
\bigl(g(X)-h(\phi(X))\bigr)
\indicator{B_I}
\Bigr]
\Biggr|
\\
&\le
\bbP(B_I)
\Biggl|
\bbE\!\Bigl[
\bigl(\etaxp-f(X)-\mu_I(\phi(X))\bigr)
\bigl(g(X)-\gamma_I(\phi(X))\bigr)
\mid B_I
\Bigr]
\Biggr|
\\
&\quad+
\bbP(B_I)
\Biggl|
\bbE\!\Bigl[
\mu_I(\phi(X))
\bigl(\gamma_I(\phi(X))-h(\phi(X))\bigr)
\mid B_I
\Bigr]
\Biggr|.
\end{align*}
For the first term Cauchy-Schwarz gives:
\begin{align*}
&
\Biggl|
\bbE\!\Bigl[
\bigl(\etaxp-f(X)-\mu_I(\phi(X))\bigr)
\bigl(g(X)-\gamma_I(\phi(X))\bigr)
\mid B_I
\Bigr]
\Biggr|
\\
&\le
\underbrace{\sqrt{
\bbE\!\left[
\bigl(\etaxp-f(X)-\mu_I(\phi(X))\bigr)^2
\mid B_I
\right]
}}_{\lambda_I}\cdot
\underbrace{\sqrt{
\bbE\!\left[
\bigl(g(X)-\gamma_I(\phi(X))\bigr)^2
\mid B_I
\right]
}}_{\varphi_I}
\end{align*}
For the second term, Cauchy--Schwarz gives:
\begin{align*}
&
\Biggl|
\bbE\!\Bigl[
\mu_I(\phi(X))
\bigl(\gamma_I(\phi(X))-h(\phi(X))\bigr)
\mid B_I
\Bigr]
\Biggr|
\\
&\le
\sqrt{
\bbE\!\left[
\mu_I(\phi(X))^2
\mid B_I
\right]
}
\cdot
\underbrace{\sqrt{
\bbE\!\left[
\bigl(\gamma_I(\phi(X))-h(\phi(X))\bigr)^2
\mid B_I
\right]
}}_{\psi_I}
\\
\end{align*}
The previous two bounds therefore imply
\begin{align}
&
\Biggl|
\bbE\!\left[
(Y-f(X))
\bigl(g(X)-h(\phi(X))\bigr)
\indicator{B_I}
\right]
\Biggr|
\nonumber\\
&\qquad\le
\bbP(B_I)
\left(
\lambda_I\varphi_I
+
\sqrt{\bbE\!\left[
\mu_I(\phi(X))^2
\mid B_I
\right]}\psi_I
\right)
\label{eq:residual_decomposition_intermediate}
\end{align}
It remains to control $
\sqrt{\bbE\!\left[
\mu_I(\phi(X))^2
\mid B_I
\right])}$ using multicalibration.
For the coefficients $c_I$ appearing in the statement, define
\[
k_I(\phi(X))
:=
\sum_{j=1}^m c_{I,j}h_j(\phi(X)),
\]
By the definition of $\mu_I$ and the tower property,
\begin{align}
\bbP(B_I)\bbE\!\left[
\mu_I(\phi(X))^2
\mid B_I
\right]
&=
\bbE\!\left[
\mu_I(\phi(X))^2\indicator{B_I}
\right]
\nonumber\\
&=
\bbE\!\left[
(\etaxp-f(X))
\mu_I(\phi(X))
\indicator{B_I}
\right]
\nonumber\\
&=
\bbE\!\left[
(Y-f(X))
\mu_I(\phi(X))
\indicator{B_I}
\right].
\label{eq:chi_residual_moment}
\end{align}
Adding and subtracting $k_I(\phi(X))$ gives
\begin{align}
\bbP(B_I)\bbE\!\left[
\mu_I(\phi(X))^2
\mid B_I
\right]
&=
\bbE\!\left[
(Y-f(X))
k_I(\phi(X))
\indicator{B_I}
\right]
\nonumber\\
&\quad+
\bbE\!\left[
(Y-f(X))
\bigl(
\mu_I(\phi(X))-k_I(\phi(X))
\bigr)
\indicator{B_I}
\right].
\label{eq:chi_split}
\end{align}
For the first term, linearity and multicalibration yield
\begin{align}
&
\Bigl|
\bbE\!\left[
(Y-f(X))
k_I(\phi(X))
\indicator{B_I}
\right]
\Bigr|
\nonumber\\
&\qquad\le
\sum_{j=1}^m
\Bigl|c_{I,j}\Bigr|
\Bigl|
\bbE\!\left[
(Y-f(X))
h_j(\phi(X))
\indicator{B_I}
\right]
\Bigr|
\nonumber\\
&\qquad\le
\alpha
\sum_{j=1}^m
\Bigl|c_{I,j}\Bigr|\tau_{h_j}(f)
\label{eq:chi_multicalibration}
\end{align}
Since $h_j:\mathbb R^d\rightarrow[0,1]$,
\[
\tau_{h_j}(f)^2
=
\bbE\!\left[
h_j(\phi(X))^2f(X)(1-f(X))
\right]
\le
\frac14
\]
and hence
\[
\sum_{j=1}^m
\Bigl|c_{I,j}\Bigr|\tau_{h_j}(f)
\le
\frac12\|c_I\|_1
\le
\frac C2
\le C
\]
Therefore,
\begin{equation}
\Biggl|
\bbE\!\left[
(Y-f(X))
k_I(\phi(X))
\indicator{B_I}
\right]
\Biggr|
\le
\alpha C
\label{eq:chi_multicalibration_C}
\end{equation}
For the second term, applying the same tower-property identity used in
\cref{eq:chi_residual_moment} gives
\begin{align}
&
\bbE\!\left[
(Y-f(X))
\bigl(
\mu_I(\phi(X))-k_I(\phi(X))
\bigr)
\indicator{B_I}
\right]
\nonumber\\
&=
\bbP(B_I)
\bbE\!\left[
\mu_I(\phi(X))
\bigl(
\mu_I(\phi(X))-k_I(\phi(X))
\bigr)
\mid B_I
\right].
\end{align}
Thus, by Cauchy--Schwarz and the definition of $e_I$,
\begin{align}
&
\Biggl|
\bbE\!\left[
(Y-f(X))
\bigl(
\mu_I(\phi(X))-k_I(\phi(X))
\bigr)
\indicator{B_I}
\right]
\Biggr|
\nonumber\\
&\qquad\le
\bbP(B_I)
\sqrt{
\bbE[\mu_I(\phi(X))^2\mid B_I]
}
\sqrt{
\bbE[
(\mu_I(\phi(X))-k_I(\phi(X)))^2
\mid B_I]
}
\nonumber\\
&\qquad=
\bbP(B_I)\sqrt{\bbE\!\left[
\mu_I(\phi(X))^2 \mid B_I \right])} e_I.
\label{eq:chi_approximation}
\end{align}
Combining
\cref{eq:chi_split,eq:chi_multicalibration_C,eq:chi_approximation},
we obtain
\[
\bbP(B_I)\bbE\!\left[
\mu_I(\phi(X))^2 \mid B_I \right]
\le
\alpha C
+
\bbP(B_I)\sqrt{\bbE\!\left[
\mu_I(\phi(X))^2 \mid B_I \right]} e_I.
\]
Dividing by $\bbP(B_I)>0$,
\[
\bbE\!\left[
\mu_I(\phi(X))^2
\mid B_I
\right]
\le
\sqrt{\bbE\!\left[
\mu_I(\phi(X))^2
\mid B_I
\right]} e_I
+
\frac{\alpha C}{\bbP(B_I)}.
\]
Since $\sqrt{\bbE\!\left[
\mu_I(\phi(X))^2
\mid B_I
\right]}\ge0$ and $e_I\ge0$, solving the corresponding quadratic
inequality gives
\begin{align*}
\sqrt{\bbE\!\left[
\mu_I(\phi(X))^2
\mid B_I
\right]}
&\le
\frac12
\left(
e_I+
\sqrt{
e_I^2+
\frac{4\alpha C}{\bbP(B_I)}
}
\right)\\
&\le
e_I
+
\sqrt{
\frac{\alpha C}{\bbP(B_I)}
},
\end{align*}
where the last inequality follows from
$\sqrt{a^2+b^2}\le a+b$ for $a,b\ge0$.
Therefore,
\begin{equation}
\sqrt{\bbE\!\left[
\mu_I(\phi(X))^2
\mid B_I
\right]}
\le \underbrace{
e_I
+
\sqrt{
\frac{\alpha C}{\bbP(B_I)}
}}_{\chi}
\label{eq:chi_final}
\end{equation}
Substituting \cref{eq:chi_final} into
\cref{eq:residual_decomposition_intermediate} gives
\begin{align}
&
\Biggl|
\bbE\!\left[
(Y-f(X))
\bigl(g(X)-h(\phi(X))\bigr)
\indicator{B_I}
\right]
\Biggr|
\nonumber \le
\bbP(B_I)
\Biggl[
\lambda_I\varphi_I
+
\psi_I
\left(
e_I
+
\sqrt{
\frac{\alpha C}{\bbP(B_I)}
}
\right)
\Biggr].
\label{eq:group_proxy_bound}
\end{align}
Finally,
\begin{align*}
&
\Biggl|
\bbE\!\left[
(Y-f(X))g(X)\indicator{B_I}
\right]
\Biggr|
\\
&\le
\Biggl|
\bbE\!\left[
(Y-f(X))h(\phi(X))\indicator{B_I}
\right]
\Biggr|
\\
&\quad+
\Biggl|
\bbE\!\left[
(Y-f(X))
\bigl(g(X)-h(\phi(X))\bigr)
\indicator{B_I}
\right]
\Biggr|
\\
&\le
\alpha\tau_h(f)
+
\bbP(B_I)
\Biggl[
\lambda_I\varphi_I
+
\psi_I
\left(
e_I
+
\sqrt{
\frac{\alpha C}{\bbP(B_I)}
}
\right)
\Biggr]
\end{align*}
Since $g$ is an indicator,
\[
\Biggl|
\bbE[
Y-f(X)
\mid
g(X)=1,B_I
]
\Biggr|
=
\frac{
\Biggl|
\bbE[
(Y-f(X))g(X)\indicator{B_I}
]
\Biggr|
}{
\bbP(g(X)=1,B_I)
}.
\]
Hence,
\[
\Biggl|
\bbE[
Y-f(X)
\mid
g(X)=1,B_I
]
\Biggr|
\le
\frac{
\alpha\tau_h(f)
+
\bbP(B_I)
\Biggl[
\lambda_I\varphi_I
+
\psi_I
\left(
e_I
+
\sqrt{
\frac{\alpha C}{\bbP(B_I)}
}
\right)
\Biggr]
}{
\bbP(g(X)=1,B_I)
}.
\]
Taking the maximum over all $I\in\mathcal I$ such that
$\bbP(g(X)=1,B_I)>0$ proves the claim.
\end{proof}

\subsection{Proof of Proposition 1}
\label[appendix]{app:new_proof_prop_tight}

\tightness*

\begin{proof}
Let
\[
X\sim\operatorname{Bernoulli}(p_g),
\qquad
g(X)=X,
\qquad
\phi(X)=k, \quad \text{with }k\text{ constant.}
\]
Set
\[
c=\frac{\rho}{p_g(1-p_g)}
\]
and define
\[
Y\mid X
\sim
\operatorname{Bernoulli}
\left(
\frac12+c(X-p_g)
\right),
\qquad
f(X)=\frac12,
\qquad
h(\phi(X))=\frac12
\]
The condition
\[
\rho\le\frac12\min\{p_g,1-p_g\}
\]
ensures that the conditional success probability lies in \([0,1]\).
Indeed, it equals
\[
\frac12+\frac{\rho}{p_g}
\quad\text{when }X=1,
\qquad
\frac12-\frac{\rho}{1-p_g}
\quad\text{when }X=0
\]

Since \(\bbE[X]=p_g\),
\[
\bbE[Y]
=
\frac12+c\bbE[X-p_g]
=
\frac12
\]
Thus, \(f\equiv\nicefrac12\) is perfectly calibrated and $\alpha=0$.

Now fix \(I\in\calI\). If
\(\bbP(f(X)\in I)=0\), residual orthogonality holds trivially.
Otherwise, \(I\) contains \(\nicefrac12\), so
\(\indicator{f(X)\in I}=1\) almost surely. Therefore,
\begin{align*}
\bbE\!\left[
(Y-f(X))h(\phi(X))\indicator{f(X)\in I}
\right]
&=
\frac12\bbE\!\left[Y-\frac12\right]=0
\end{align*}

For every such positive-probability \(I\),
\begin{align*}
\bbE[Y-f(X)\mid g(X)=1,f(X)\in I]
&=
\bbE[Y\mid X=1]-\frac12=
c(1-p_g)=
\frac{\rho}{p_g}
\end{align*}
Hence,
\[
\operatorname{bias}_{g,\calI}(f)
=
\frac{\rho}{p_g}
\]

It remains to evaluate the terms in
\cref{thm:residual_alignment}. We have
\[
\etax-f(X)
=
c(X-p_g)
\]
For any \(I\) with \(\bbP(f(X)\in I)>0\), the event
\(B_I=\{f(X)\in I\}\) occurs almost surely. Since \(\phi(X)\) is
constant,
\[
\bbE[\etax-f(X)\mid\phi(X),B_I]=0,
\qquad
\bbE[g(X)\mid\phi(X),B_I]=p_g
\]
Consequently,
\begin{align*}
\lambda_I^2
&=
c^2\bbE[(X-p_g)^2]
=
c^2p_g(1-p_g),
\\
\varphi_I^2
&=
\bbE[(X-p_g)^2]
=
p_g(1-p_g),
\\
e_I 
&=0
\end{align*}
It follows that
\[
\lambda_I\varphi_I
=
c\,p_g(1-p_g)
=
\rho,
\qquad
\chi_I\psi_I=0
\]

Finally,
\[
\bbP(B_I)=1,
\qquad
\bbP(g(X)=1,f(X)\in I)=p_g
\]
and the multicalibration term is zero by residual orthogonality
Therefore, \cref{thm:residual_alignment} gives
\[
\operatorname{bias}_{g,\calI}(f)
\le
\frac{\lambda_I\varphi_I}{p_g}
=
\frac{\rho}{p_g}
\]
Since the actual bias equals \(\rho/p_g\), the bound is attained with
equality. This concludes the proof.
\end{proof}

\subsection{Proof of Corollary 1}
\label[appendix]{app:explicit}

\begin{corollary:explicit_bias}[What naming a bias is worth]
Let $g\in\calG$ be a bias group and consider
\(
\tilde\phi(X)=[\phi(X),g(X)].
\)
Assume that $\calH$ contains the coordinate projection
$h_g(\tilde\phi(X))=g(X)$. Then, under the setting and assumptions of
\cref{thm:residual_alignment},
\(
\varphi_I=\psi_I=0
\) for all $I \in \calI$
and therefore:
\[
\operatorname{bias}_{g,\calI}(f)
\le
\max_{I\in\calI}
\frac{\alpha\tau_g(f)}{\bbP(g(X)=1,B_I)} \notag
\]
\end{corollary:explicit_bias}

\begin{proof}
Since $g(X)$ is a coordinate of $\tilde\phi(X)$,
\[
\bbE[g(X)\mid \tilde\phi(X),B_I]=g(X),
\]
so $\varphi_I=0$. Moreover, choosing the corresponding coordinate
projection $h_g$ gives
\[
h_g(\tilde\phi(X))=g(X),
\]
hence $\psi_I=0$. 
Since $g$ is a binary map:
\[
h_g(\tilde\phi(X))^2 = g(X)^2 = g(X),
\]
and therefore:
\[
\tau_{h_g}(f)^2 = \bbE\!\left[
h_g(\tilde\phi(X))^2
f(X)\bigl(1-f(X)\bigr)
\right] = \bbE\!\left[
g(X)
f(X)\bigl(1-f(X)\bigr)
\right] = \tau_g(f)^2
\]
Therefore both interaction terms in
\cref{thm:residual_alignment} vanish, yielding
\[
\operatorname{bias}_{g,\calI}(f)
\le
\max_{I\in\calI}
\frac{\alpha\tau_g(f)}{\mathbb P(g(X)=1, f(X)\in I)}
\]
\end{proof}

\subsection{Proof of Theorem 2}
\label[appendix]{app:proof_thm2}
\begin{theorem:acc_improv}[Accuracy improvement under multicalibration]
Suppose that $f^\star$ is $(\calG^\star,\calI,\alpha)$-multicalibrated,
where $\calI$ contains the intervals $[0,\frac12)$ and
$[\frac12,1]$. Let
\(
g_-(X)=\indicator{f_0(X)<\tfrac12}
\) and 
\(
g_+(X)=\indicator{f_0(X)\ge\tfrac12}
\) denote the two groups induced by the base judge's hard prediction and let $\{g_-,g_+\}\subseteq\calG^\star$.
Let $\tau^\uparrow$ and $\tau^\downarrow$ denote the corresponding
multicalibration scale parameters. Then:
\[
  \mathrm{Acc}(f^\star) - \mathrm{Acc}(f_0) \;\ge\; \underbrace{\bbE\Biggl[\,\Bigl| 2f^\star(X) - 1\Bigr|\;\indicator{X\in \calC}\Biggr]}_{\Gamma} - \underbrace{2\,\alpha\,\Biggl(\tau^\uparrow + \tau^\downarrow \Biggr)}_{\zeta} \notag
\]
\end{theorem:acc_improv}

\begin{proof}
The proof of~\cref{thm:acc-improve} relies on two Lemmas illustrated in what follows. First, we characterize the net predictive performance variation between the uncalibrated and the calibrated model.
\begin{lemma}
\label{lemma:lemma1}
The variation in predictive performance between the calibrated judge and the base judge, which depends only on the number of instances inside the set $\mathcal C$, is:
\begin{align}    
\displaystyle \mathrm{Acc}(f^\star) - \mathrm{Acc}(f_0) & = \pup(2\qup-1)\notag\\
&+\pdn(1-2\qdn)
\end{align}
\end{lemma}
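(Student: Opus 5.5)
The statement to prove is Lemma~\ref{lemma:lemma1}, an exact identity. I will write accuracy as an expectation of indicator agreements and compare the two judges pointwise. For any score $f$, $\mathrm{Acc}(f)=\bbE\bigl[Y\,\indicator{f(X)\ge 1/2}+(1-Y)\,\indicator{f(X)<1/2}\bigr]$. Taking the difference $\mathrm{Acc}(f^\star)-\mathrm{Acc}(f_0)$ gives the expectation of the pointwise difference of these correctness indicators.

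First I check that this difference is zero outside $\calC$. If $X\notin\calC$, then $f_0(X)$ and $f^\star(X)$ fall on the same side of $\nicefrac12$: either both are $<\nicefrac12$ or both are $\ge\nicefrac12$, since $\calC^\uparrow$ and $\calC^\downarrow$ exhaust the two crossing patterns. Hence the hard predictions coincide and the correctness indicators cancel. This justifies the phrase ``depends only on the instances inside $\calC$'' and reduces the expectation to the disjoint events $\{X\in\calC^\uparrow\}$ and $\{X\in\calC^\downarrow\}$.

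Next I evaluate the pointwise difference on each piece. On $\calC^\uparrow$, $f^\star$ predicts $1$ and $f_0$ predicts $0$, so the difference of correctness indicators is $Y-(1-Y)=2Y-1$. On $\calC^\downarrow$ the roles swap and the difference is $(1-Y)-Y=1-2Y$. Thus
\[
\mathrm{Acc}(f^\star)-\mathrm{Acc}(f_0)=\bbE\bigl[(2Y-1)\indicator{X\in\calC^\uparrow}\bigr]+\bbE\bigl[(1-2Y)\indicator{X\in\calC^\downarrow}\bigr].
\]
Conditioning each term on its event gives $\bbE[(2Y-1)\indicator{X\in\calC^\uparrow}]=\pup\,(2\qup-1)$, and similarly for $\calC^\downarrow$. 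If an event has probability zero, its term is zero and the conditional probability can be assigned any value, so the identity still holds.

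No step here is a real obstacle. The only point needing care is the boundary convention at exactly $\nicefrac12$: predictions use $\indicator{f\ge 1/2}$, and this must match the definitions of $\calC^\uparrow$ (with $f_0<\nicefrac12\le f^\star$) and $\calC^\downarrow$, so that ties are classified consistently and the complement of $\calC$ truly has equal decisions. After that the lemma is pure bookkeeping. The substantive work for Theorem~\ref{thm:acc-improve} comes later, in relating $\qup$ and $\qdn$ to $f^\star$ through multicalibration on $g_\pm$ intersected with the bands $[0,\nicefrac12)$ and $[\nicefrac12,1]$.
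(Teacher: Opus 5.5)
Your proof is correct and follows the same route as the paper. Both arguments note that the two judges' hard decisions coincide outside $\calC$, that the change in correctness is $2Y-1$ on $\calC^\uparrow$ and $1-2Y$ on $\calC^\downarrow$, and then rewrite these terms using $\pup,\qup$ and $\pdn,\qdn$. Your version additionally spells out the indicator bookkeeping, the tie convention at $\nicefrac12$, and the zero-probability case, all of which the paper's short argument leaves implicit.
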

\begin{proof}
Given that the correction is bounded, only instances within the set $\calC$ can actually change their prediction. Specifically, a multicalibrated judge $f^\star$ gains accuracy for an instance in $\calC^\uparrow$ if $Y = 1$ and loses it if $Y = 0$, while the opposite applies for $\calC^\downarrow$: 
\begin{equation*}
\begin{split}
\mathrm{Acc}(f^\star) - \mathrm{Acc}(f_0)
&= \pup \left(\qup - (1-\qup)\right) \\
 & \qquad + \pdn \left((1-\qdn) - \qdn\right) \\
&= \pup(2\qup - 1)  \\
& \qquad + \pdn(1 - 2\qdn)
\end{split}
\end{equation*}
\end{proof}
Notably, the flip rates \(\qup\) and \(\qdn\) are controlled by the multicalibration property of \(f^\star\) which promotes alignment with the true label. We characterize this in the following Lemma which establishes two bounds on the population error of the predictions flips. 
\begin{lemma}
\label{lemma:lemma2}
Suppose that $f^\star$ is $(\calG^\star,\calI,\alpha)$-multicalibrated,
where $\calI$ contains the intervals $[0,\frac12)$ and
$[\frac12,1]$. Let
\(
g_-(X)=\indicator{f_0(X)<\tfrac12}
\) and 
\(
g_+(X)=\indicator{f_0(X)\ge\tfrac12}
\) denote the two groups induced by the base judge's hard prediction and let $\{g_-,g_+\}\subseteq\calG^\star$.
Let $\tau^\uparrow$ and $\tau^\downarrow$ denote the corresponding
multicalibration scale parameters. Then:
\[
\qup
\ge
\tfrac12-\frac{\alpha\tau^\uparrow}{\pup},
\qquad
\qdn
\le
\tfrac12+\frac{\alpha\tau^\downarrow}{\pdn}
\]
\end{lemma}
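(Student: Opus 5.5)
The plan is to apply the multicalibration guarantee of $f^\star$ directly to the two cells that define the changed decisions. The key observation is that each of $\calC^\uparrow$ and $\calC^\downarrow$ is exactly the intersection of one base-judge group with one confidence band:
\[
\indicator{X\in\calC^\uparrow}=g_-(X)\,\indicator{f^\star(X)\in[\frac12,1]},\qquad
\indicator{X\in\calC^\downarrow}=g_+(X)\,\indicator{f^\star(X)\in[0,\frac12)}.
\]
By hypothesis both groups lie in $\calG^\star$ and both intervals lie in $\calI$. Definition~\ref{def:multicalibration} therefore gives
\[
\bigl|\bbE[(Y-f^\star(X))\indicator{X\in\calC^\uparrow}]\bigr|\le\alpha\,\tau^\uparrow,\qquad
\bigl|\bbE[(Y-f^\star(X))\indicator{X\in\calC^\downarrow}]\bigr|\le\alpha\,\tau^\downarrow .
\]
Here I fix the reading of the unnamed scale parameters as $\tau^\uparrow:=\tau_{g_-}(f^\star)$ and $\tau^\downarrow:=\tau_{g_+}(f^\star)$.

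For the upward flips, I use the lower side of the first inequality:
\[
\bbE[Y\,\indicator{X\in\calC^\uparrow}]\ge\bbE[f^\star(X)\,\indicator{X\in\calC^\uparrow}]-\alpha\tau^\uparrow .
\]
On $\calC^\uparrow$ we have $f^\star(X)\ge\frac12$ pointwise, so the right-hand side is at least $\frac12\,\pup-\alpha\tau^\uparrow$. Since $\bbE[Y\,\indicator{X\in\calC^\uparrow}]=\pup\,\qup$, dividing by $\pup$ yields the first claim.

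For the downward flips, I use the upper side of the second inequality:
\[
\bbE[Y\,\indicator{X\in\calC^\downarrow}]\le\bbE[f^\star(X)\,\indicator{X\in\calC^\downarrow}]+\alpha\tau^\downarrow .
\]
On $\calC^\downarrow$ we have $f^\star(X)<\frac12$, so the right-hand side is at most $\frac12\,\pdn+\alpha\tau^\downarrow$. Dividing by $\pdn$ gives the second bound.

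There is no substantive obstacle; the only care points are bookkeeping. First, both conditional flip rates require $\pup>0$ and $\pdn>0$. When a flip set is null, its statement is vacuous, and its contribution to Lemma~\ref{lemma:lemma1} is zero anyway. Second, the events must be matched exactly, so that the group-times-band product in Definition~\ref{def:multicalibration} coincides with the flip set, including the half-open endpoint convention at $\frac12$ used in defining $\calC^\uparrow$ and $\calC^\downarrow$. Once these are in place, the lemma combined with Lemma~\ref{lemma:lemma1} yields
\[
\mathrm{Acc}(f^\star)-\mathrm{Acc}(f_0)\ge\bigl(\textstyle\frac12\pup-\alpha\tau^\uparrow\bigr)\cdot2-\pup+\ldots,
\]
and Theorem~\ref{thm:acc-improve} follows from that combination.
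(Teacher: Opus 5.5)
Your proof of the lemma is correct and follows the paper's argument step for step: write $\calC^\uparrow$ and $\calC^\downarrow$ as the group-times-band cells $g_-\indicator{f^\star(X)\ge\frac12}$ and $g_+\indicator{f^\star(X)<\frac12}$, apply the multicalibration bound (your reading $\tau^\uparrow=\tau_{g_-}(f^\star)$, $\tau^\downarrow=\tau_{g_+}(f^\star)$ is consistent with Definition~\ref{def:multicalibration}), use $f^\star\ge\frac12$ or $f^\star<\frac12$ on the respective cell, and divide by its probability. One correction to your closing remark: plugging the lemma's final bounds into Lemma~\ref{lemma:lemma1} only gives $\mathrm{Acc}(f^\star)-\mathrm{Acc}(f_0)\ge-2\alpha(\tau^\uparrow+\tau^\downarrow)$, so Theorem~\ref{thm:acc-improve} does not follow from that combination; the $\Gamma$ term requires keeping the intermediate inequality $\bbE[Y\indicator{X\in\calC^\uparrow}]\ge\bbE[f^\star(X)\indicator{X\in\calC^\uparrow}]-\alpha\tau^\uparrow$ (and its downward analogue) before relaxing $f^\star$ to $\frac12$, which is what the paper does.
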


\begin{proof}
We prove the lower bound on $\qup$ in full. The upper bound on $\qdn$ is symmetric.

By definition $\mathcal{C}^\uparrow = \{ \mathbf x : f_0(\mathbf x) < \frac12 \le f^\star(\mathbf x)\}$. 
Let \(I_{+} := \indicator{f^\star(X) \ge \tfrac12}\). Since \(g_- := \indicator{f_0(X) < \tfrac12}\): 
\[
  \indicator{X \in \calC^\uparrow}
  \;=\; \indicator{f_0(X) < \tfrac12}\;\indicator{f^\star(X) \ge \tfrac12}
  \;=\; g_-\,I_{+}.
\]
By assumption $g_- \in \calG^\star$ and $I_+ \in \calI$, so $\indicator{X \in \calC^\uparrow}$ is an admissible group--level-set combination.

Because $f^\star$ is $(\calG^\star, \alpha)$-multicalibrated with respect to $\calI$, the joint residual over $\calC^{\uparrow}$ is bounded by the scale $\tau^\uparrow$:
\[
  \Biggl|\,\bbE\Bigl[(Y - f^\star(X))\,\indicator{X \in \calC^\uparrow}\Bigr]\,\Biggr|
  \;=\;
  \Biggl|\,\bbE\Bigl[(Y - f^\star(X))\,g_-\,I_+\Bigr]\,\Biggr|
  \;\le\; \alpha\,\tau^\uparrow.
\]

Considering the lower side of the absolute value, we obtain
\[
  \bbE[Y\,\indicator{X \in \calC^\uparrow}] - \bbE[f^\star(X)\,\indicator{X \in \calC^\uparrow}]
  \;=\; \bbE\bigl[(Y - f^\star(X))\,\indicator{X \in \calC^\uparrow}\bigr]
  \;\ge\; -\,\alpha\,\tau^\uparrow,
\]

Since $f^\star(\mathbf x) \ge \tfrac12$ over the set $\calC^{\uparrow}$, hence $f^\star(X)\,\indicator{X \in \calC^\uparrow} \ge \tfrac12\,\indicator{X \in \calC^\uparrow}$. Thus,
\begin{equation}
    \bbE[Y\,\indicator{X \in \calC^\uparrow}] \ge \bbE[f^\star(X)\,\indicator{X \in \calC^\uparrow}] - \alpha\,\tau^\uparrow \;\ge\; \tfrac12\,\pup - \alpha\,\tau^\uparrow.
\label{eq:diff}
\end{equation}

Assuming $\pup > 0$ and 
\[
\qup = \frac{\bbP(Y=1,X \in \calC^{\uparrow})}{\pup} = \frac{\bbE[Y\,\indicator{X \in \calC^\uparrow}]}{\pup}
\] 
Then, dividing the previous \cref{eq:diff} by $\pup$ yields
\[
  \qup \;\ge\; \tfrac12 - \frac{\alpha\,\tau^\uparrow}{\pup},
\]
This concludes the proof for the lower bound of $\mathbb{P}(Y=1\mid X\in\calC^\uparrow)$.  
For the upper bound of $\qdn$, we apply the same procedure considering $\mathcal C^\downarrow = \{\mathbf x : f^\star(\mathbf x) < \tfrac12 \le f_0(\mathbf x)\}$, $g_+ \coloneqq \indicator{f_0(X) \ge \tfrac12} \in \calG$ and the level set $I_{-} = \indicator{f^\star(X) < \tfrac12} \in \calI$, giving $\indicator{X \in \calC^\downarrow} = g_+\,I_-$.  
\end{proof}
We can now conclude the proof of the main statement. From~\cref{lemma:lemma2}, multicalibration on the up-flip set gives: 
\[
\bbE[Y\,\indicator{X \in \calC^\uparrow}] \ge \bbE[f^\star(X)\,\indicator{X \in \calC^\uparrow}] - \alpha\,\tau^\uparrow, \qquad \bbE[Y\,\indicator{X \in \calC^\downarrow}] \le \bbE[f^\star\,\indicator{X \in \calC^\downarrow}] + \alpha\,\tau^\downarrow
\]
Using $\bbE[Y\,\indicator{X \in \calC^\uparrow}] = \pup \qup$ and $\bbE[\indicator{X \in \calC^\uparrow}] = \pup$, the first term of \cref{lemma:lemma1} can be rewritten as:
\begin{align}
  \pup(2\qup - 1)
  & =\; 2\,\bbE[Y\,\indicator{X \in \calC^\uparrow}] - \pup \notag\\ 
  & \ge\; 2\,\bbE[f^\star(X)\,\indicator{X \in \calC^\uparrow}] - \pup - 2\alpha\,\tau^\uparrow \notag\\
  & =\; \bbE\bigl[(2f^\star(X) - 1)\,\indicator{X \in \calC^\uparrow}\bigr] - 2\alpha\,\tau^\uparrow.
\end{align}
Symmetrically, using $\bbE[Y\,\indicator{X \in \calC^\downarrow}] = \pdn \qdn$ and $\bbE[\indicator{X \in \calC^\downarrow}] = \pdn$, the second term is:
\begin{align}
  \pdn(1 - 2\qdn)
  & =\; \pdn - 2\,\bbE[Y\,\indicator{X \in \calC^\downarrow}] \notag \\
  &\ge\; \pdn - 2\,\bbE[f^\star(X)\,\indicator{X \in \calC^\downarrow}] - 2\alpha\,\tau^\downarrow \notag \\
  &=\; \bbE\bigl[(1 - 2f^\star(X))\,\indicator{X \in \calC^\downarrow}\bigr] - 2\alpha\,\tau^\downarrow.
\end{align}

Substituting these quantities into~\cref{lemma:lemma1},
\begin{equation}
  \mathrm{Acc}(f^\star(X)) - \mathrm{Acc}(f_0)
  \;\ge\;
  \bbE\bigl[(2f^\star(X) - 1)\,\indicator{X \in \calC^\uparrow}\bigr] + \bbE\bigl[(1 - 2f^\star(X))\,\indicator{X \in \calC^\downarrow}\bigr]
  \;-\; 2\alpha\,(\tau^\uparrow + \tau^\downarrow).
  \label{eq:substitution}
\end{equation}

On $\calC^\uparrow$ we have $f^\star(X) \ge \tfrac12$, so $2f^\star(X) - 1 = \lvert 2f^\star(X) - 1\rvert \ge 0$; on $\calC^\downarrow$ we have $f^\star(X) < \tfrac12$, so $1 - 2f^\star(X) = \lvert 2f^\star(X) - 1\rvert > 0$. Moreover, since the two sets are disjoint, $\indicator{X \in \calC^\uparrow} + \indicator{\calC^\downarrow} = \indicator{X \in \calC^\uparrow \cup \calC^\downarrow}$. Therefore
\[
  \bbE\Biggl[(2f^\star(X) - 1)\,\indicator{X \in \calC^\uparrow}\Biggr] + \bbE\Biggl[(1 - 2f^\star(X))\,\indicator{X \in \calC^\downarrow}\Biggr]
  \;=\; \bbE\Biggl[\Bigl| 2f^\star(X) - 1\Bigr|\,\indicator{X \in \calC^\uparrow \cup \calC^\downarrow}\Biggr]
  \;=\; \Gamma
\]
with $\Gamma \geq 0$ as the expectation of a nonnegative integrand. If we then combine these with \cref{eq:substitution}, we obtain:
\[
\mathrm{Acc}(f^\star(X)) - \mathrm{Acc}(f_0)
  \;\ge\;
  \Gamma \;-\; 2\alpha\,(\tau^\uparrow + \tau^\downarrow).
\]

Because $\Gamma \ge 0$, the right-hand side is positive whenever $\Gamma > 2\alpha(\tau^\uparrow + \tau^\downarrow)$, giving $\mathrm{Acc}(f^\star(X)) > \mathrm{Acc}(f_0)$. In case of perfect multicalibration $\alpha \to 0$, the bound reduces to $\mathrm{Acc}(f^\star(X)) - \mathrm{Acc}(f_0) \ge \Gamma$, strict in presence of $\Gamma > 0$.
\end{proof}

\section{Implementation Details}
\label[appendix]{app:impl_details}
In principle, \method is
agnostic to the choice of multicalibration strategy. In practice, however, we rely on \mcgrad~\citep{DBLP:conf/kdd/TaxPLHKOVA26} as our default multicalibrator because it can automatically search the supplied feature space and thereby uncover undocumented, latent sources of bias. We note that the guarantee in~\cref{thm:acc-improve} additionally requires the multicalibrator to control violations defined jointly by confidence intervals and the base-decision groups $g_-$ and $g_+$. This requirement is naturally satisfied by \mcgrad, which searches for calibration violations using both the supplied features and the base confidence score.
To exploit \mcgrad's ability to discover groups automatically, we encode both candidate responses using the open source \texttt{Muse-Glimmer-30B}~\citep{DBLP:journals/corr/abs-2606-12429} and denote their embeddings with \(e_1, e_2 \in \mathbb{R}^d\). 
Following previous work~\citep{DBLP:journals/corr/abs-2606-19057}, we represent each comparison through the Principal Component Analysis (PCA) of the difference embedding,
\ie, 
\( 
E(X) := {\mathrm{PCA}}(e_1-e_2).
\) 
Such contrastive representation suppresses components shared by the two responses and emphasizes their relative attributes, which are more directly aligned with the structure of pairwise biases. When PCA is then applied to the difference embedding, the retained directions summarize variation in the relation between the two candidates rather than variation in their absolute representation.  
This choice is also consistent with \cref{thm:residual_alignment}: the multicalibrator can only control biases whose error pattern is captured by the supplied representation. A contrastive representation is therefore desirable as it better captures relative differences between the candidate responses.

\paragraph{Hyperparameter tuning.} 
We perform five-fold cross validation. For each held-out fold, the remaining four folds form the calibration set on which we tune and fit \method and the baselines.  
We assign all comparisons that share a question identifier or an identical prompt to the same fold, so that no prompt is seen both during fitting and at test time. 
Hyperparameters are tuned as follows for \method and all baselines. 
We split the calibration set at random into 80\% for training and 20\% for validation, again keeping all comparisons of a question on the same side. For each method, dataset and held-out fold, we run 40 trials of Bayesian optimisation starting from the method's default configuration, score each trial by its validation log loss, and refit the best configuration on the whole calibration set. 
Specifically, for \method we tune jointly the number $k \in \{4, 8, 16, 32\}$ of principal components retained in $E(X)$ with the hyperparameters of \mcgrad~\citep{DBLP:conf/kdd/TaxPLHKOVA26}, using its own tuning routine over the default search space. 
For the tunable baselines we use Optuna~\citep{DBLP:conf/kdd/AkibaSYOK19} over the following hyperparamters (all searched on a log scale, with defaults in parentheses): for \texttt{CalibraEval}, we tune the anti-collapse weight $\lambda \in [0.01, 0.9]$ ($0.1$), the learning rate in $[10^{-3}, 0.3]$ ($0.05$) and the number of iterations in $\{50, \dots, 2000\}$ ($500$); for \texttt{LenControl}, the inverse $L_2$ strength $C \in [10^{-3}, 10^{3}]$ ($1$) and the minimum model frequency in $\{1, \dots, 60\}$ ($10$) ; for \texttt{IGLB}, the number of bins $M \in \{3, \dots, 100\}$ ($20$), the maximum number of iterations in $\{100, \dots, 2000\}$ ($500$) and the tolerance $\epsilon \in [10^{-4}, 0.2]$ ($0.01$). The other baselines do not have any hyperparameter to tune.
The selected configurations are fold-specific and available in the code repository.

\section{Data}
\label[appendix]{app:data}
We use five pairwise evaluation benchmarks from the literature. \texttt{RewardBench} (\texttt{RWB})~\citep{DBLP:conf/naacl/LambertPMMLCDKZCSH25} consists of $3.0$k prompt-chosen-rejected trios belonging to four different macro-categories: \textit{chat} (strong vs. weak answers), \textit{chat-hard} (tricky and adversarial answers), \textit{safety} (refuse dangerous content and avoid over-refusals) and \textit{reasoning} (coding and math problems). Answers are either LLM-generated (coming from 44 different models) 
or crafted manually, and the ground truth label is a manually verified correctness judgment rather than a human vote.
\texttt{MTBench} (\texttt{MTB})~\citep{zheng2023judging} consists of $3.3$k human votes on pairwise comparisons over 80 questions from eight categories (writing, roleplay, reasoning, math, coding, extraction, stem, humanities), with answers generated by six different LLMs (GPT-4, GPT-3.5, Claude-v1, Vicuna-13B, Alpaca-13B, and LLaMA-13B) and judged by 65 human annotators.
We aggregate the individual votes into one comparison per (question, model pair) by majority vote. We further discard ties, i.e., cases where the annotators split evenly or could not identify a preferable option, and second-turn data, i.e., each question has two conversational turns, but we keep only the first one since each model's second-turn answer is conditioned on its own first turn answer. This leaves 709 pairwise comparisons.
\texttt{ArenaExpl} (\texttt{AEX})~\citep{DBLP:conf/icml/chatarena} is a corpus of $106$k pairwise comparisons extracted from conversational data where users write the prompt, receive two LLM-generated answers (coming from 55 different models), and act as the judge by expressing a preference. Prompts are written in different languages. We again discard ties (\ie cases where the user does not prefer any of the two options, or where both responses are bad) and keep only single-turn conversations. This leaves $53.8$k pairwise comparisons.
Similarly, \texttt{PPE}~\citep{DBLP:conf/iclr/Frick25} contains $16$k prompts in different languages of users who cast the preference vote for one out of two options generated by two LLMs from a pool of 20 models. 
After discarding ties, the corpus amounts to $10.2$k pairwise comparisons.
Finally, \texttt{PKU}~\citep{DBLP:conf/acl/JiPKU} consists of $82.1$k pairwise comparisons over safety-oriented prompts generated combining human demonstrations with LLMs. Each pair carries two human preference labels: which response is safer and which is more helpful; we use the latter as ground truth. No ties are allowed. Both responses are generated by an LLM from the Alpaca-family (3 models in total). We use the \textit{train} split of the original dataset which consists of $73.9$k pairwise comparisons.

Note that family bias can only be measured on comparisons where exactly one of the two answers comes from the judge's own model family. For \texttt{Muse-Glimmer-30B} this is the case in all datasets except \texttt{PKU}, where both answers of a pair are generated by the same model. We consider answers from Llama models as the same family as Muse, since both are developed by Meta. For \texttt{Qwen2.5-7B-Instruct}, only \texttt{PPE} and \texttt{AEX} contain answers generated by the same model family, so family is not measurable on \texttt{MTB}, \texttt{RWB} and \texttt{PKU}.

\section{Metrics}
\label[appendix]{app:metrics}

\paragraph{ECCE.}  Expected Cumulative Calibration Error ($\operatorname{ECCE}$) \citep{tygert2023ecce} was first introduced as a reliable parameter free alternative to binning-based calibration metrics. $\operatorname{ECCE}$ computes the cumulative calibration residuals:
\[
C_k(f)
:=
\frac{1}{n}
\sum_{i=1}^{k}
\left(y_{(i)}-f(\mbx_{(i)})\right),
\qquad
k=1,\ldots,n
\]
where
\(
f(\mbx_{(1)})\leq f(\mbx_{(2)})\leq\cdots\leq f(\mbx_{(n)})
\) denote the ordered values of $f$ over a dataset.

\begin{definition}
    $\operatorname{ECCE\text{-}R}$ is the range of the empirical cumulative calibration error defined as:
    \[    
    \operatorname{ECCE\text{-}R}
    :=
    \max_{0\leq k\leq n}C_k(f)
    -
    \min_{0\leq k\leq n}C_k(f).
    \]
\end{definition}
Thus, $\operatorname{ECCE\text{-}R}$ measures the full range of the cumulative residual curve and therefore captures the largest cumulative calibration error over any contiguous score interval
\citep{tygert2023ecce}.

\paragraph{MCE.} The \textit{Multicalibration} property of a model is evaluated via the multicalibration Error ($\operatorname{MCE}$). This relies on the Expected Cumulative Calibration Error ($\operatorname{ECCE}$) \citep{tygert2023ecce}. The Multicalibration Error ($\operatorname{MCE}$) is then the largest standardized ECCE-R across all considered groups and confidence bands~\citep{guy2025measuring,DBLP:conf/kdd/TaxPLHKOVA26}.

\begin{definition:mce}[Multicalibration Error] 
Consider a collection $\mathcal{G}$ of measurable group indicators $g:\mathcal{X}\rightarrow\{0,1\}$. For each $g\in\mathcal{G}$, let
\(
\mathcal{D}_g
:=
\{(\mbx_i,y_i)\}_{i=1}^{n_g}
\)
denote the sample restricted to instances satisfying $g(\mbx_i)=1$,
and let $\operatorname{ECCE\text{-}R}_g$ denote the ECCE-R computed
restricting the sample to $\mathcal{D}_g$. Under the null hypothesis of perfect calibration, define the corresponding noise scale as 
\(
\tau_g(f)
:=
\frac{1}{n_g}
\sqrt{
\sum_{i=1}^{n_g}
f(x_i)\bigl(1-f(x_i)\bigr)
}.
\)
The empirical Multicalibration Error of $f$ over
$\mathcal{G}$ is 
\[
\operatorname{MCE}
:=
\max_{g\in\mathcal{G}}
\frac{
\operatorname{ECCE\text{-}R}_g(f)
}{
\tau_g(f)
}
\]
\end{definition:mce}

\noindent The normalization accounts for the different sampling variability of groups of different sizes, while the outer maximum measures the strongest statistical evidence of group-wise miscalibration. 
\paragraph{Log loss.}
We additionally report the binary log loss as a measure of probabilistic predictive performance:
\[
\operatorname{LogLoss}(f)
:=
-\frac{1}{n}
\sum_{i=1}^{n}
\left[
y_i\log f(\mbx_i)
+
(1-y_i)\log\bigl(1-f(\mbx_i)\bigr)
\right].
\]
Log loss is a strictly proper scoring rule~\citep{gneiting2007strictly},
meaning that its expected value is uniquely minimized when
\(
f(\mbx)=\bbP(Y=1\mid X=\mbx).
\)
It therefore rewards probabilistic predictions that are both well calibrated
and informative, while strongly penalizing confident incorrect predictions. Accordingly, we use log loss as a measure of the overall quality of the judge's probabilistic predictions, with lower values indicating better
predictive performance. A value of zero is attained only when every observed outcome is assigned probability one for the correct class.

\section{Additional Experiments}
\label[appendix]{app:add_experiments}

\subsection{Results on specified and unspecified groups}
\label[appendix]{app:unspec}
\begin{figure}[h]
    \centering
    \includegraphics[width=1\linewidth]{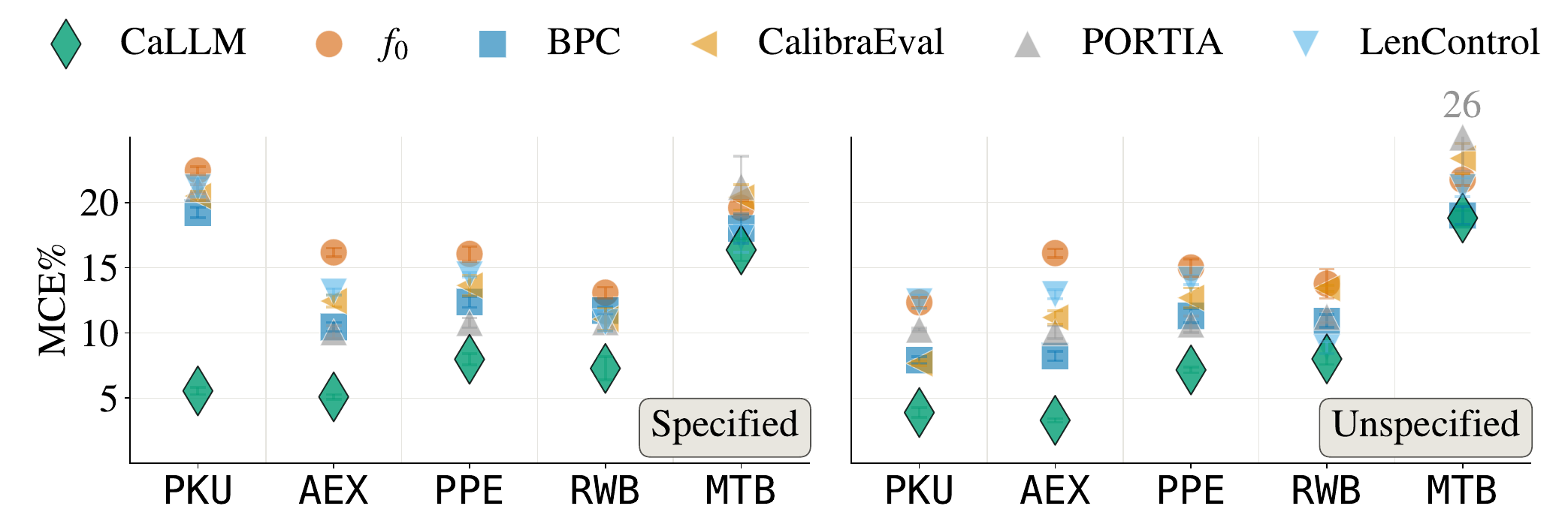}
    \caption{Average per-axis $\text{MCE}_{\%}$ ($\pm$ s.e.) on the \emph{specified} and \emph{unspecified} groups for each method across datasets.}
    \label{fig:APP_specified_unspecified}
\end{figure}
We complement the per-bias analysis of Q1 with two additional analyses. First, we consider the \emph{specified} biases jointly (\ie position, verbosity and family): the MCE is computed over every position, verbosity and family group together with all their pairwise and three-way intersections. Second, we consider \emph{unspecified} groups. Following \citet{DBLP:conf/kdd/TaxPLHKOVA26}, we pass the concatenated embeddings $[e_Q, e_1, e_2]$ of the prompt and the two responses to the MCE as numerical segment features to create $1000$ groups. These groups depend on neither labels nor predictions and are identical for all methods.

\paragraph{Accounting for interactions between biases increases \method's advantage.}
\Cref{fig:APP_specified_unspecified} (left) reports the joint $\text{MCE}_{\%}$ over the \emph{specified} groups and their intersections. \method attains the lowest $\text{MCE}_{\%}$ on all five datasets, whereas it is the best method on 11 of 14 (dataset-bias) pairs when each bias is scored alone. A baseline that targets one specific bias does not mitigate other biases as well; \eg on \texttt{RWB}, \texttt{BPC} has the lowest position error ($6.10 \pm 0.26$) but a joint error of $11.71 \pm 0.27$, whereas \method reaches $7.27 \pm 0.89$.

\paragraph{\method also mitigates the bias on unspecified groups.} \Cref{fig:APP_specified_unspecified} (right) reports the $\text{MCE}_{\%}$ on the \emph{unspecified} groups. \method attains the lowest $\text{MCE}_{\%}$ on four of five datasets and lowers that of $f_0$ by a median of ${\approx}52\%$, more than twice the best baseline (\texttt{BPC}, ${\approx}25\%$). Against the best baseline on each dataset, it lowers the error by $60\%$ on \texttt{AEX}, $49\%$ on \texttt{PKU}, $33\%$ on \texttt{PPE}, and  $10\%$ on \texttt{RWB}. On \texttt{MTB}, its gain over \texttt{BPC} ($1\%$) is within one standard error. Thus, \method is also effective at mitigating the bias for unspecified groups.

\subsection{Judge ablation: results with a different judge}
\label[appendix]{app:judge_ablation}
We repeat the experiments from Section~\ref{sec:experiments} using
\texttt{Qwen~2.5-7B-Instruct} as the judge, taking its verbalized confidence as $f_0$.
\begin{figure}[h]
    \centering
    \includegraphics[width=1\linewidth]{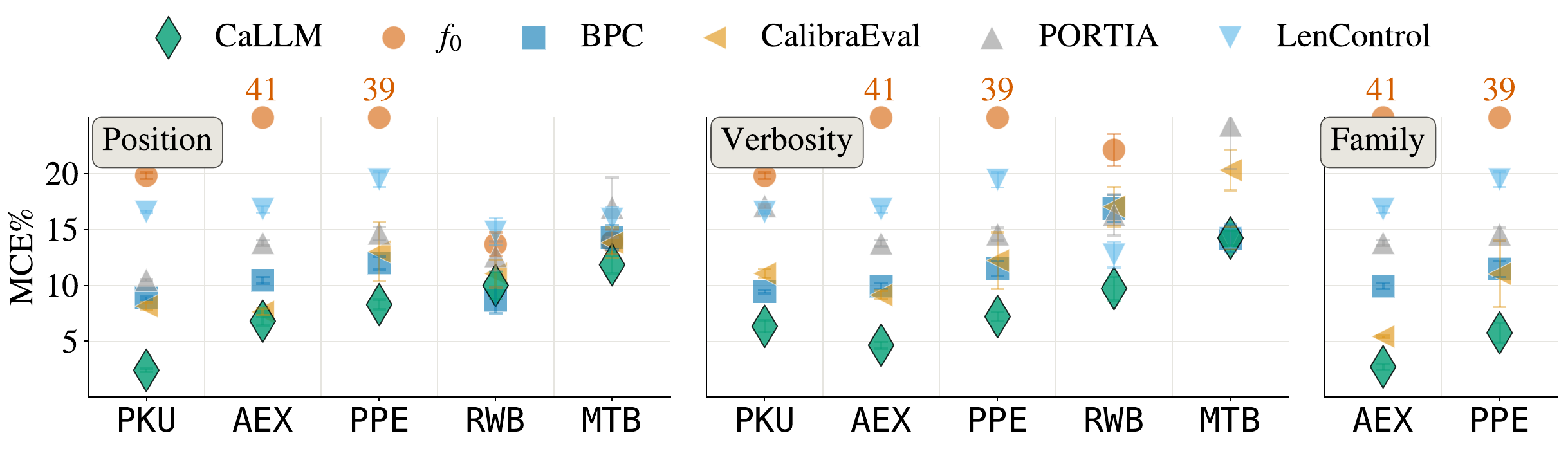}
    \caption{Average per-group $\text{MCE}_\%$ ($\pm$ s.e.) for \method and the mitigation baselines across datasets with \texttt{Qwen~2.5-7B-Instruct} used as the judge. The self-enhancement axis contains \texttt{AEX} and \texttt{PPE} only: \texttt{PKU}, \texttt{RWB} and \texttt{MTB} do not contain \texttt{qwen}-family responses. }
    \label{fig:APP_Q1_qwenjudge}
\end{figure}

\textbf{Q1. The bias mitigation transfers to a different judge.} 
\Cref{fig:APP_Q1_qwenjudge} reports the per-group $\text{MCE}_\%$ ($\pm$ s.e.) for \method and the \cref{fig:Q1} baselines when \texttt{Qwen~2.5-7B-Instruct} is used as the judge. \method attains the lowest $\text{MCE}_\%$ on $10$ of the $12$ (dataset, bias group) pairs: four out of five on \textit{Position}, four out of five on \textit{Verbosity}, and two out of two on \textit{Family}. Even when the verbalized confidences come from a different judge, \method still mitigates the bias effectively.

\begin{table}[]
\scriptsize
\caption{$\text{MCE}_\%$ per bias group ($\pm 1$ s.e.) for \method and
    \methodG {} using the \texttt{Qwen~2.5-7B-Instruct} judge. The
    \emph{family} axis contains results for \texttt{AEX} and \texttt{PPE} only:
    \texttt{PKU}, \texttt{RWB} and \texttt{MTB} contain no
    \texttt{qwen}-family response.}
\setlength{\tabcolsep}{4pt}
    \centering
    \scalebox{0.85}{
    \begin{tabular}{l|cc|cc|cc|cc|cc}
\toprule
 & \multicolumn{2}{c|}{PKU} & \multicolumn{2}{c|}{AEX} & \multicolumn{2}{c|}{PPE} & \multicolumn{2}{c|}{RWB} & \multicolumn{2}{c}{MTB} \\
 & \method & \methodG & \method & \methodG & \method & \methodG & \method & \methodG & \method & \methodG \\
\midrule
\textit{Position} & \textbf{2.4 $\pm$ 0.2} & \textbf{2.4 $\pm$ 0.2} & 6.8 $\pm$ 0.4 & \textbf{2.4 $\pm$ 0.3} & 8.3 $\pm$ 0.5 & \textbf{3.7 $\pm$ 0.5} & 10.0 $\pm$ 1.2 & \textbf{7.9 $\pm$ 1.1} & \textbf{11.8 $\pm$ 0.8} & 12.0 $\pm$ 0.8 \\
\textit{Verbosity} & 6.3 $\pm$ 0.5 & \textbf{2.8 $\pm$ 0.2} & 4.6 $\pm$ 0.3 & \textbf{2.3 $\pm$ 0.1} & 7.2 $\pm$ 0.4 & \textbf{4.0 $\pm$ 0.3} & 9.7 $\pm$ 1.0 & \textbf{7.4 $\pm$ 0.8} & 14.4 $\pm$ 0.7 & \textbf{13.5 $\pm$ 1.1} \\
\textit{Family} & -- & -- & 2.7 $\pm$ 0.3 & \textbf{2.1 $\pm$ 0.2} & 5.7 $\pm$ 0.9 & \textbf{4.7 $\pm$ 0.9} & -- & -- & -- & -- \\
\bottomrule
\end{tabular}}
    \label{tab:APP_Q2_qwenjudge}
\end{table}
\textbf{Q2. Declaring the bias groups helps regardless of the judge.}
Table~\ref{tab:APP_Q2_qwenjudge} repeats the \method vs. \method$\!+g$ comparison using the \texttt{Qwen~2.5-7B-Instruct} judge to obtain the verbalized confidences. As in Q2, giving \method the explicit bias axes further reduces the bias on most (dataset, bias
group) pairs.

\begin{table}[]
\scriptsize
\caption{Average accuracy and log-loss ($\pm 1$ s.e.) across datasets using \texttt{Qwen~2.5-7B-Instruct} as judge.}
\setlength{\tabcolsep}{4pt}
    \centering
    \scalebox{0.85}{
    \begin{tabular}{l|cc|cc|cc|cc|cc}
\toprule
 & \multicolumn{2}{c|}{PKU} & \multicolumn{2}{c|}{AEX} & \multicolumn{2}{c|}{PPE} & \multicolumn{2}{c|}{RWB} & \multicolumn{2}{c}{MTB} \\
 & Acc & $\mathcal{L}_{\log}$ & Acc & $\mathcal{L}_{\log}$ & Acc & $\mathcal{L}_{\log}$ & Acc & $\mathcal{L}_{\log}$ & Acc & $\mathcal{L}_{\log}$ \\
\midrule
\method & \textbf{.68 $\pm$ .01} & \textbf{.60 $\pm$ .01} & \textbf{.61 $\pm$ .01} & \textbf{.65 $\pm$ .01} & \textbf{.61 $\pm$ .01} & \textbf{.66 $\pm$ .01} & \textbf{.84 $\pm$ .01} & \textbf{.37 $\pm$ .01} & \textbf{.80 $\pm$ .01} & \textbf{.45 $\pm$ .02} \\
\midrule
$f_0$ & .66 $\pm$ .01 & .72 $\pm$ .01 & .58 $\pm$ .01 & .88 $\pm$ .01 & .59 $\pm$ .01 & .87 $\pm$ .01 & .80 $\pm$ .01 & .50 $\pm$ .02 & \textbf{.80 $\pm$ .01} & .51 $\pm$ .02 \\
\texttt{BPC} & .66 $\pm$ .01 & .65 $\pm$ .01 & .59 $\pm$ .01 & .71 $\pm$ .01 & .60 $\pm$ .01 & .72 $\pm$ .01 & .80 $\pm$ .01 & .42 $\pm$ .01 & \textbf{.80 $\pm$ .01} & .46 $\pm$ .02 \\
\texttt{CalibraEval} & .66 $\pm$ .01 & .63 $\pm$ .01 & .59 $\pm$ .01 & .68 $\pm$ .01 & .60 $\pm$ .01 & .69 $\pm$ .01 & .80 $\pm$ .01 & .43 $\pm$ .01 & \textbf{.80 $\pm$ .01} & .47 $\pm$ .02 \\
\texttt{PORTIA} & .59 $\pm$ .01 & .71 $\pm$ .01 & .59 $\pm$ .01 & .75 $\pm$ .01 & .60 $\pm$ .01 & .74 $\pm$ .01 & .76 $\pm$ .01 & .50 $\pm$ .01 & .70 $\pm$ .03 & .59 $\pm$ .06 \\
\texttt{LenControl} & .66 $\pm$ .01 & .64 $\pm$ .01 & .59 $\pm$ .01 & .69 $\pm$ .01 & .60 $\pm$ .01 & .69 $\pm$ .01 & .80 $\pm$ .01 & .45 $\pm$ .01 & \textbf{.80 $\pm$ .01} & .46 $\pm$ .02 \\

\bottomrule
\end{tabular}}
    \label{tab:APP_Q3_qwenjudge}
\end{table}
\textbf{Q3. \method improves the predictive performance also when \texttt{Qwen~2.5-7B-Instruct} is used as judge.} Table~\ref{tab:APP_Q3_qwenjudge} reports the average accuracy and log-loss for \method and the baselines on each dataset. \method improves on every baseline in both metrics and on every dataset, except on \texttt{MTB} where it ties with other baselines. Multicalibrating \texttt{Qwen~2.5-7B-Instruct} therefore improves predictive performance, and the main empirical findings therefore also hold for this judge.

\subsection{Embedder ablation: results using a separate open-source embedder}
\label[appendix]{app:embedder_ablation}
We refit \method unchanged with the embeddings obtained from an off-the-shelf embedder,
\ie \texttt{Qwen3-Embedding-0.6B}, to show that the method also applies to judges whose
hidden states are not accessible, such as API-only models.

\begin{figure}[h]
    \centering
    \includegraphics[width=1\linewidth]{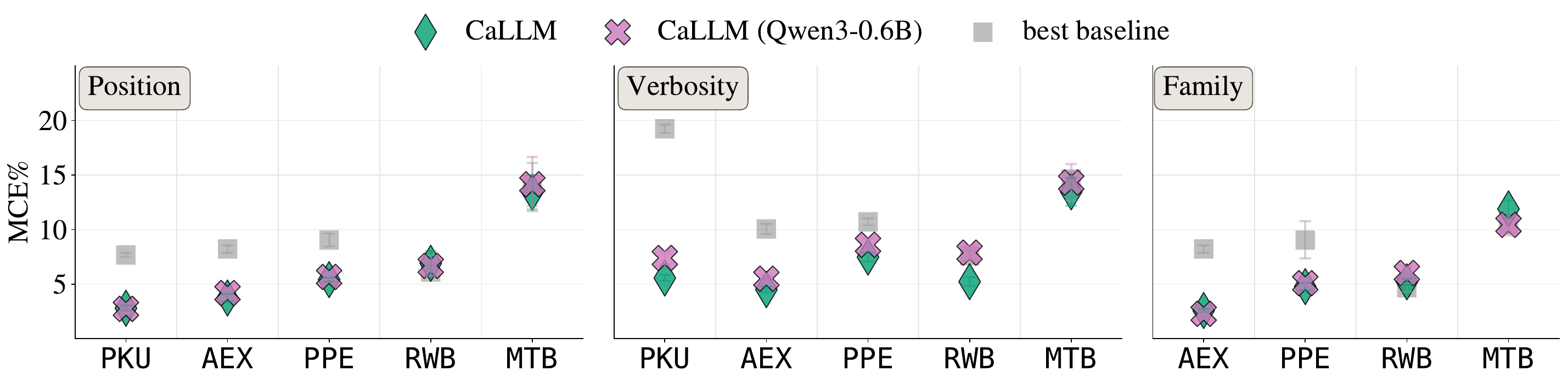}
    \caption{Average per-group $\text{MCE}_\%$ ($\pm 1$ standard error) for \method (embeddings from the judge's last hidden state), \method{\tt (Qwen3-0.6B)} (embeddings from \texttt{Qwen3-Embedding-0.6B}) and, as a reference, the \emph{best} of the \cref{fig:Q1} baselines across datasets. Family is undefined on \texttt{PKU}, since it contains no response from the judge's own family.}
    \label{fig:APP_Q1_qwenembedder}
\end{figure}
\textbf{\method does not need the judge's hidden states to mitigate the bias.} We compare \method (\ie the embeddings are the judge's last hidden state) with its ablated variant \method{\tt (Qwen3-0.6B)}, which takes them from the off-the-shelf sentence embedder (\texttt{Qwen3-Embedding-0.6B}). As a reference we additionally report the \emph{best} of the \cref{fig:Q1} baselines on that (dataset, bias group) pair.

\Cref{fig:APP_Q1_qwenembedder} reports the per-group $\text{MCE}_\%$ ($\pm$ standard error) for \method, its ablated variant and the \emph{best} baseline. On \textit{Position} and \textit{Family} the two variants perform very similarly on nearly every dataset, and where they differ most (\ie \textit{Family} on \texttt{MTB}) the ablated variant is the better one ($10.4\pm0.8$ vs.\ $11.9\pm0.8$). \textit{Verbosity} degrades on all five datasets, plausibly because answer length is a property that the judge's own states encode directly. Even so, \method{\tt (Qwen3-0.6B)} still beats the \emph{best baseline} on $10$ of the $14$ (dataset, bias group) pairs, including four of the five on verbosity. As for \method, its losses concentrate on the two smallest datasets: \texttt{RWB} (all three groups, by at most $1.3$ points) and \textit{Position} on \texttt{MTB}, where it is behind the best baseline by only $0.03$ points. These results show that \method's bias reduction does not require access to the judge's internal representations.

\begin{table}[]
\scriptsize
\setlength{\tabcolsep}{4pt}
    \centering
     \caption{$\text{MCE}_\%$ per bias group ($\pm 1$ standard error) for \method and \methodG{} across datasets using \texttt{Qwen3-Embedding-0.6B} embeddings. Self-enhancement is undefined on \texttt{PKU} because the dataset does not contain \texttt{muse}-family responses.}
    \scalebox{0.85}{
    \begin{tabular}{l|cc|cc|cc|cc|cc}
\toprule
 & \multicolumn{2}{c|}{PKU} & \multicolumn{2}{c|}{AEX} & \multicolumn{2}{c|}{PPE} & \multicolumn{2}{c|}{RWB} & \multicolumn{2}{c}{MTB} \\
 & \method & \methodG & \method & \methodG & \method & \methodG & \method & \methodG & \method & \methodG \\
\midrule
\textit{Position} & 2.7 $\pm$ 0.2 & \textbf{2.2 $\pm$ 0.1} & 4.2 $\pm$ 0.3 & \textbf{2.1 $\pm$ 0.2} & 5.7 $\pm$ 0.3 & \textbf{4.1 $\pm$ 0.2} & 6.7 $\pm$ 1.0 & \textbf{5.6 $\pm$ 0.5} & 14.2 $\pm$ 2.5 & \textbf{11.9 $\pm$ 1.5} \\
\textit{Verbosity} & 7.4 $\pm$ 0.6 & \textbf{2.4 $\pm$ 0.1} & 5.5 $\pm$ 0.3 & \textbf{2.3 $\pm$ 0.2} & 8.6 $\pm$ 0.5 & \textbf{4.3 $\pm$ 0.1} & 7.9 $\pm$ 0.7 & \textbf{5.2 $\pm$ 0.5} & 14.3 $\pm$ 1.7 & \textbf{11.7 $\pm$ 1.4} \\
\textit{Family} & -- & -- & 2.3 $\pm$ 0.1 & \textbf{2.2 $\pm$ 0.1} & \textbf{5.1 $\pm$ 0.3} & \textbf{5.1 $\pm$ 0.3} & \textbf{6.0 $\pm$ 0.6} & 6.6 $\pm$ 0.7 & \textbf{10.4 $\pm$ 0.8} & 11.1 $\pm$ 1.7 \\
\bottomrule
\end{tabular}} \label{tab:APP_Q2_qwenembedder}
\end{table}
\textbf{Declaring the bias groups still helps under an off-the-shelf embedder.} \Cref{tab:APP_Q2_qwenembedder} repeats the \method vs. \methodG~comparison of Q2
with the embeddings taken from \texttt{Qwen3-Embedding-0.6B}. Handing the reference the
three group indicators lowers the $\text{MCE}_\%$ on every dataset for \textit{Position} and
\textit{Verbosity}, and the gain is largest on verbosity: this falls from $7.4\pm0.6$ to $2.4\pm0.1$ on \texttt{PKU} and from
$8.6\pm0.5$ to $4.3\pm0.1$ on \texttt{PPE}. For \textit{Family} the change is
always within one standard error, in either direction. The value of declaring a bias axis
is therefore unchanged by the embedder.

\begin{table}[]
\scriptsize
\setlength{\tabcolsep}{4pt}
    \centering
\caption{Average accuracy and log-loss ($\pm$ s.e.) for \method and its ablated variant \method{\tt (Qwen3-0.6B)} across datasets. \emph{best baseline} is the strongest of the \cref{fig:Q1} baselines on each (dataset, metric) pair.}
    \scalebox{0.85}{
    \begin{tabular}{l|cc|cc|cc|cc|cc}
\toprule
 & \multicolumn{2}{c|}{PKU} & \multicolumn{2}{c|}{AEX} & \multicolumn{2}{c|}{PPE} & \multicolumn{2}{c|}{RWB} & \multicolumn{2}{c}{MTB} \\
 & ACC & $\mathcal{L}_{\log}$ & ACC & $\mathcal{L}_{\log}$ & ACC & $\mathcal{L}_{\log}$ & ACC & $\mathcal{L}_{\log}$ & ACC & $\mathcal{L}_{\log}$ \\
\midrule
\method & \textbf{.69 $\pm$ .01} & \textbf{.57 $\pm$ .01} & \textbf{.68 $\pm$ .01} & \textbf{.60 $\pm$ .01} & \textbf{.67 $\pm$ .01} & \textbf{.61 $\pm$ .01} & \textbf{.93 $\pm$ .01} & \textbf{.21 $\pm$ .01} & \textbf{.81 $\pm$ .01} & \textbf{.41 $\pm$ .02} \\
\method{} (Qwen3-0.6B) & \textbf{.69 $\pm$ .01} & .58 $\pm$ .01 & .66 $\pm$ .01 & .61 $\pm$ .01 & .66 $\pm$ .01 & .62 $\pm$ .01 & .91 $\pm$ .01 & .27 $\pm$ .01 & \textbf{.81 $\pm$ .01} & .42 $\pm$ .02 \\
\emph{best baseline} & .64 $\pm$ .01 & .65 $\pm$ .01 & .65 $\pm$ .01 & .69 $\pm$ .01 & .66 $\pm$ .01 & .66 $\pm$ .01 & .88 $\pm$ .01 & .29 $\pm$ .01 & .81 $\pm$ .01 & .43 $\pm$ .02 \\
\bottomrule
\end{tabular}}
\label{tab:APP_Q3_qwenembedder}
\end{table}
\textbf{Using an off-the-shelf embedder still improves the predictive performance against the baselines.} Table~\ref{tab:APP_Q3_qwenembedder} reports accuracy and log-loss for \method, \method{\tt (Qwen3-0.6B)} and the \emph{best} Q1 baseline for each (dataset, metric) pair. The two variants are close on four of the five datasets, while on \texttt{RWB} the judge's own states buy a real margin ($.927$ vs. $.911$ accuracy, $.203$ vs. $.264$ log-loss). Crucially, both variants beat the strongest baseline on all ten (dataset, metric) pairs, so \method still outperforms every baseline even when using an off-the-shelf embedder.

\subsection{Ablation study: multicalibration method}
\label[appendix]{app:ablate_multi}
\begin{figure}[h]
    \centering
    \includegraphics[width=1\linewidth]{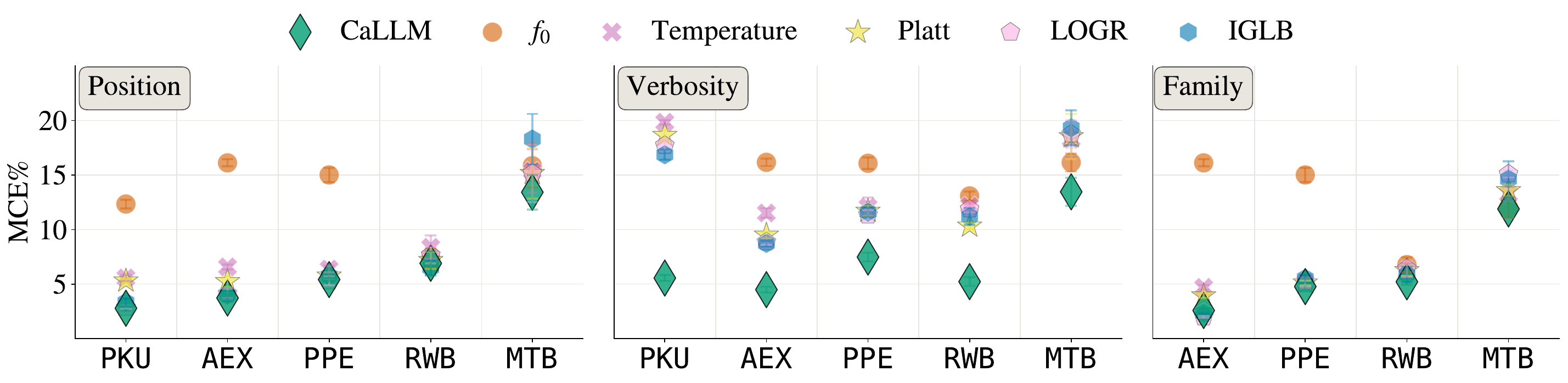}
\caption{Average per-group $\text{MCE}_\%$ ($\pm 1$ s.e.) across datasets for \method and the calibration and multicalibration baselines. Family is undefined on \texttt{PKU}, since it contains no response from the judge's own family.}
    \label{fig:APP_calibration_ablation}
\end{figure}
In this experiment, we evaluate alternative multicalibration strategies, all fitted to the same base scores \(f_0\), validating our choice of \mcgrad as multicalibrator (still denoted as \method in both tables and figures for coherence with the main text). We emphasize that \method is agnostic to the choice of multicalibrator and can be used with any multicalibration procedure. We consider two \emph{multicalibration} baselines, which correct the scores within groups: \texttt{LOGR}, a group-conditional unbiased logistic regression~\citep{DBLP:conf/colt/GopalanKSZ22}, and \texttt{IGLB}, iterative grouped linear binning~\citep{detommaso2024multicalibration}. Like \method, these baselines have no access to the bias groups, so we build their groups with the clustering strategy of \citet{detommaso2024multicalibration}. That is, we embed each comparison (the prompt followed by both responses) with the same encoder as \method, reduce the embedding to 20 dimensions with UMAP, and fit a Gaussian mixture on the reduced embedding together with $f_0$, choosing the number of components in $\{2,\dots,20\}$ by BIC. Each component with at least 30 calibration comparisons defines a group. We also include $f_0$ for comparison with the uncalibrated judge. Finally, we compare with two \emph{global} calibration strategies: temperature scaling~\citep{DBLP:conf/icml/GuoPSW17} and Platt scaling~\citep{platt1999probabilistic}. 

\cref{fig:APP_calibration_ablation} shows the per-group $\text{MCE}_{\%}$ of \method and the other calibration strategies across datasets. 
Correcting the scores within groups decreases the biases: \mcgrad beats both global calibrators on all 14 (dataset--bias) pairs, while \texttt{LOGR} and \texttt{IGLB} fail to beat them mostly on the two smallest datasets, \texttt{RWB} and \texttt{MTB}. \mcgrad is the best on 11 of 14 pairs, with the largest gap on \textit{Verbosity}, where it lowers the error of the best calibrator by at least $27\%$. Indeed, verbosity depends on how the two responses compare, which the  clusters of the other multicalibration approaches are unlikely to isolate; \mcgrad instead searches the contrastive $\phi(X)$ for the groups where the judge is miscalibrated.

We report in~\cref{tab:acc_log_multi} accuracy and log-loss on all five datasets. \mcgrad is the best on both metrics for all datasets with the sole exception of accuracy  in \texttt{MTB}: \mcgrad ties with the two multicalibration strategies \ie, \texttt{LOGR} and \texttt{IGLB}.

\paragraph{}
\begin{table}[]
    \centering
    \setlength{\tabcolsep}{4pt}
    \caption{Average ACC and log-loss $\mathcal{L}_{\log}$ ($\pm$ s.e.) across datasets. \method always matches or improves both metrics over all calibration and multicalibration strategies.}
    \scalebox{0.7}{\begin{tabular}{l|cc|cc|cc|cc|cc}
\toprule
 & \multicolumn{2}{c|}{\texttt{PKU}} & \multicolumn{2}{c|}{\texttt{AEX}} & \multicolumn{2}{c|}{\texttt{PPE}} & \multicolumn{2}{c|}{\texttt{RWB}} & \multicolumn{2}{c}{\texttt{MTB}} \\
 & ACC & $\mathcal{L}_{\log}$ & ACC & $\mathcal{L}_{\log}$ & ACC & $\mathcal{L}_{\log}$ & ACC & $\mathcal{L}_{\log}$ & ACC & $\mathcal{L}_{\log}$ \\
\midrule
\method & \textbf{.69 $\pm$ .01} & \textbf{.57 $\pm$ .01} & \textbf{.68 $\pm$ .01} & \textbf{.60 $\pm$ .01} & \textbf{.67 $\pm$ .01} & \textbf{.61 $\pm$ .01} & \textbf{.93 $\pm$ .01} & \textbf{.21 $\pm$ .01} & \textbf{.81 $\pm$ .01} & \textbf{.41 $\pm$ .02} \\
\midrule
$f_0$ & .63 $\pm$ .01 & .73 $\pm$ .01 & .63 $\pm$ .01 & .76 $\pm$ .01 & .64 $\pm$ .01 & .74 $\pm$ .01 & .86 $\pm$ .01 & .36 $\pm$ .02 & .79 $\pm$ .01 & .49 $\pm$ .02 \\
\texttt{Temperature} & .64 $\pm$ .01 & .63 $\pm$ .01 & .65 $\pm$ .01 & .64 $\pm$ .01 & .65 $\pm$ .01 & .63 $\pm$ .01 & .88 $\pm$ .01 & .30 $\pm$ .01 & .80 $\pm$ .01 & .43 $\pm$ .01 \\
\texttt{Platt} & .64 $\pm$ .01 & .63 $\pm$ .01 & .65 $\pm$ .01 & .64 $\pm$ .01 & .65 $\pm$ .01 & .63 $\pm$ .01 & .88 $\pm$ .01 & .30 $\pm$ .01 & .80 $\pm$ .01 & .43 $\pm$ .01 \\
\texttt{LOGR} & .64 $\pm$ .01 & .62 $\pm$ .01 & .65 $\pm$ .01 & .63 $\pm$ .01 & .65 $\pm$ .01 & .63 $\pm$ .01 & .88 $\pm$ .01 & .30 $\pm$ .01 & .81 $\pm$ .01 & .43 $\pm$ .01 \\
\texttt{IGLB} & .64 $\pm$ .01 & .62 $\pm$ .01 & .64 $\pm$ .01 & .63 $\pm$ .01 & .65 $\pm$ .01 & .63 $\pm$ .01 & .88 $\pm$ .01 & .30 $\pm$ .01 & .81 $\pm$ .01 & .44 $\pm$ .02 \\
\bottomrule
\end{tabular}}
    \label{tab:acc_log_multi}
\end{table}

\subsection{Validating Recoverability}
\label[appendix]{app:recover}
    
In this section, we assess whether \method can recover the input-space groups $\calG$ from the supplied representation $[\phi(X),f_0(X)]$. This condition directly reduces the terms $\varphi$ and $\psi$ in~\cref{thm:residual_alignment} as illustrated in Corollary 1. Specifically, we take $\calH$ to be the tree family used by \mcgrad and measure how accurately it predicts membership in the \textit{position}, \textit{verbosity}, and \textit{family} bias groups. We carry this analysis using \texttt{Muse-Glimmer-30B} as both the judge and the encoder as in~\cref{sec:experiments}.  
\begin{table}[h]
\centering
\caption{Recoverability results across datasets.}
\begin{tabular}{l|ccccc}
\toprule
 & PKU & AEX & PPE & RWB & MTB \\
\midrule
Position & 0.50 $\pm$ 0.01 & 0.76 $\pm$ 0.01 & 0.76 $\pm$ 0.01 & 0.93 $\pm$ 0.01 & 0.66 $\pm$ 0.05 \\
Verbosity & 0.92 $\pm$ 0.01 & 0.90 $\pm$ 0.01 & 0.89 $\pm$ 0.01 & 0.95 $\pm$ 0.01 & 0.94 $\pm$ 0.02 \\
Family & - & 0.98 $\pm$ 0.01 & 0.98 $\pm$ 0.01 & 0.99 $\pm$ 0.01 & 0.98 $\pm$ 0.02 \\
\bottomrule
\end{tabular}
\label{tab:recoverability}
\end{table}

Results are reported in~\cref{tab:recoverability}. Across the 14 available dataset-group pairs, the bias groups are generally well recoverable from the representation, with median accuracies of $\approx0.76$ for \textit{Position}, $\approx0.92$ for \textit{Verbosity}, and $\approx0.98$ for \textit{Family}. The main exception is \textit{Position} on \texttt{PKU}, where recoverability is at chance level ($0.50$). 
Interestingly, \method still substantially reduces the \textit{Position} bias on \texttt{PKU}, and explicitly providing the position groups yields little further improvement. This is consistent with our analysis: poor group recoverability need not prevent bias mitigation when the corresponding judge-error components, \(\lambda\) and \(\chi\), are sufficiently small, allowing larger group-side terms \(\varphi\) and \(\psi\) without inducing large residual bias.
Thus, \method can mitigate position bias even when position membership itself is poorly recoverable.

\subsection{Detailed results}
\label[appendix]{app:detailed_results}

We provide all our results in what follows.
\begin{table}[]
    \centering
    \setlength{\tabcolsep}{4pt}
\caption{Average per-group $\text{MCE}_\%$ and $\text{MCE}_{\sigma}$ ($\pm$ standard error) for \method and the mitigation baselines across datasets. The \emph{family} axis does not show \texttt{PKU} since it does not contain \texttt{muse}-family responses.}
\scalebox{0.8}{\begin{tabular}{ll|ccc|ccc}
\toprule
 &  & \multicolumn{3}{c|}{$\text{MCE}_{\%}$} & \multicolumn{3}{c}{$\text{MCE}_{\sigma}$} \\
\cmidrule(lr){3-5}\cmidrule(lr){6-8}
Data & Method & Position & Verbosity & Family & Position & Verbosity & Family \\
\midrule
\multirow{6}{*}{\rotatebox{90}{\texttt{PKU}}} & \method & \textbf{2.78 $\pm$ 0.23} & \textbf{5.55 $\pm$ 0.28} & -- & \textbf{3.85 $\pm$ 0.31} & \textbf{7.68 $\pm$ 0.38} & -- \\
 & $f_0$ & 12.34 $\pm$ 0.40 & 22.47 $\pm$ 0.28 & -- & 18.08 $\pm$ 0.61 & 32.92 $\pm$ 0.47 & -- \\
 & \texttt{BPC} & 7.91 $\pm$ 0.25 & 19.25 $\pm$ 0.39 & -- & 11.22 $\pm$ 0.36 & 27.28 $\pm$ 0.56 & -- \\
 & \texttt{CalibraEval} & 7.66 $\pm$ 0.19 & 20.51 $\pm$ 0.35 & -- & 10.18 $\pm$ 0.26 & 27.26 $\pm$ 0.47 & -- \\
 & \texttt{PORTIA} & 10.26 $\pm$ 0.10 & 21.06 $\pm$ 0.31 & -- & 14.73 $\pm$ 0.19 & 30.22 $\pm$ 0.45 & -- \\
 & \texttt{LenControl} & 12.40 $\pm$ 0.54 & 21.16 $\pm$ 0.70 & -- & 18.16 $\pm$ 0.75 & 31.00 $\pm$ 1.02 & -- \\
\midrule
\multirow{6}{*}{\rotatebox{90}{\texttt{AEX}}} & \method & \textbf{3.72 $\pm$ 0.35} & \textbf{4.49 $\pm$ 0.27} & \textbf{2.58 $\pm$ 0.16} & \textbf{4.27 $\pm$ 0.41} & \textbf{5.14 $\pm$ 0.29} & \textbf{2.96 $\pm$ 0.19} \\
 & $f_0$ & 16.11 $\pm$ 0.33 & 16.16 $\pm$ 0.33 & 16.11 $\pm$ 0.33 & 20.82 $\pm$ 0.39 & 20.88 $\pm$ 0.40 & 20.82 $\pm$ 0.39 \\
 & \texttt{BPC} & 8.21 $\pm$ 0.36 & 10.46 $\pm$ 0.35 & 8.21 $\pm$ 0.36 & 10.09 $\pm$ 0.42 & 12.84 $\pm$ 0.41 & 10.09 $\pm$ 0.42 \\
 & \texttt{CalibraEval} & 11.18 $\pm$ 0.51 & 12.44 $\pm$ 0.46 & 11.18 $\pm$ 0.51 & 14.06 $\pm$ 0.66 & 15.63 $\pm$ 0.58 & 14.06 $\pm$ 0.66 \\
 & \texttt{PORTIA} & 10.05 $\pm$ 0.47 & 10.06 $\pm$ 0.47 & 10.05 $\pm$ 0.47 & 12.79 $\pm$ 0.58 & 12.80 $\pm$ 0.58 & 12.79 $\pm$ 0.58 \\
 & \texttt{LenControl} & 12.96 $\pm$ 0.34 & 13.15 $\pm$ 0.24 & 12.96 $\pm$ 0.34 & 16.86 $\pm$ 0.42 & 17.11 $\pm$ 0.30 & 16.86 $\pm$ 0.42 \\
\midrule
\multirow{6}{*}{{\rotatebox{90}{\texttt{PPE}}}} & \method & \textbf{5.41 $\pm$ 0.18} & \textbf{7.48 $\pm$ 0.41} & \textbf{4.77 $\pm$ 0.34} & \textbf{2.63 $\pm$ 0.10} & \textbf{3.62 $\pm$ 0.19} & \textbf{2.32 $\pm$ 0.17} \\
 & $f_0$ & 15.01 $\pm$ 0.65 & 16.08 $\pm$ 0.52 & 15.01 $\pm$ 0.65 & 8.35 $\pm$ 0.38 & 8.93 $\pm$ 0.30 & 8.35 $\pm$ 0.38 \\
 & \texttt{BPC} & 9.05 $\pm$ 0.60 & 12.36 $\pm$ 0.40 & 9.24 $\pm$ 0.51 & 4.80 $\pm$ 0.33 & 6.55 $\pm$ 0.21 & 4.91 $\pm$ 0.28 \\
 & \texttt{CalibraEval} & 9.25 $\pm$ 1.77 & 13.67 $\pm$ 0.71 & 9.06 $\pm$ 1.71 & 4.91 $\pm$ 1.07 & 7.12 $\pm$ 0.59 & 4.81 $\pm$ 1.05 \\
 & \texttt{PORTIA} & 10.15 $\pm$ 0.42 & 10.71 $\pm$ 0.32 & 10.15 $\pm$ 0.42 & 5.55 $\pm$ 0.23 & 5.86 $\pm$ 0.18 & 5.55 $\pm$ 0.23 \\
 & \texttt{LenControl} & 13.82 $\pm$ 0.58 & 14.47 $\pm$ 0.18 & 13.82 $\pm$ 0.58 & 7.75 $\pm$ 0.34 & 8.10 $\pm$ 0.12 & 7.75 $\pm$ 0.34 \\
\midrule
\multirow{6}{*}{\rotatebox{90}{\texttt{RWB}}} & \method & 6.90 $\pm$ 1.11 & \textbf{5.22 $\pm$ 0.41} & 5.21 $\pm$ 0.27 & 3.70 $\pm$ 0.57 & \textbf{2.82 $\pm$ 0.25} & 2.81 $\pm$ 0.18 \\
 & $f_0$ & 7.35 $\pm$ 0.91 & 13.08 $\pm$ 0.42 & 6.78 $\pm$ 0.67 & 3.35 $\pm$ 0.41 & 5.98 $\pm$ 0.15 & 3.09 $\pm$ 0.28 \\
 & \texttt{BPC} & \textbf{6.10 $\pm$ 0.26} & 10.92 $\pm$ 0.53 & 5.69 $\pm$ 0.26 & \textbf{2.48 $\pm$ 0.09} & 4.45 $\pm$ 0.22 & 2.32 $\pm$ 0.11 \\
 & \texttt{CalibraEval} & 7.70 $\pm$ 0.43 & 10.71 $\pm$ 0.92 & 7.29 $\pm$ 0.22 & 3.14 $\pm$ 0.18 & 4.42 $\pm$ 0.55 & 2.99 $\pm$ 0.20 \\
 & \texttt{PORTIA} & 6.24 $\pm$ 0.73 & 10.21 $\pm$ 0.60 & 4.83 $\pm$ 0.38 & 2.80 $\pm$ 0.31 & 4.59 $\pm$ 0.23 & \textbf{2.17 $\pm$ 0.15} \\
 & \texttt{LenControl} & 9.55 $\pm$ 0.73 & 7.62 $\pm$ 0.50 & \textbf{4.68 $\pm$ 0.40} & 4.94 $\pm$ 0.35 & 3.95 $\pm$ 0.28 & 2.42 $\pm$ 0.21 \\
\midrule
\multirow{6}{*}{\rotatebox{90}{\texttt{MTB}}} & \method & \textbf{13.42 $\pm$ 1.59} & \textbf{13.46 $\pm$ 1.28} & 11.89 $\pm$ 0.79 & \textbf{2.23 $\pm$ 0.26} & \textbf{2.20 $\pm$ 0.20} & 1.96 $\pm$ 0.12 \\
 & $f_0$ & 15.86 $\pm$ 2.03 & 16.15 $\pm$ 0.82 & 12.00 $\pm$ 0.92 & 2.86 $\pm$ 0.40 & 2.85 $\pm$ 0.09 & 2.15 $\pm$ 0.22 \\
 & \texttt{BPC} & 14.30 $\pm$ 2.05 & 15.76 $\pm$ 1.11 & \textbf{10.51 $\pm$ 1.04} & 2.43 $\pm$ 0.37 & 2.63 $\pm$ 0.16 & \textbf{1.79 $\pm$ 0.19} \\
 & \texttt{CalibraEval} & 16.67 $\pm$ 2.23 & 17.92 $\pm$ 1.12 & 13.47 $\pm$ 1.88 & 2.91 $\pm$ 0.42 & 3.09 $\pm$ 0.20 & 2.35 $\pm$ 0.36 \\
 & \texttt{PORTIA} & 14.50 $\pm$ 1.27 & 21.07 $\pm$ 2.76 & 15.12 $\pm$ 2.94 & 2.42 $\pm$ 0.24 & 3.44 $\pm$ 0.39 & 2.56 $\pm$ 0.55 \\
 & \texttt{LenControl} & 14.12 $\pm$ 1.99 & 14.63 $\pm$ 0.65 & 13.54 $\pm$ 1.74 & 2.63 $\pm$ 0.38 & 2.69 $\pm$ 0.10 & 2.53 $\pm$ 0.34 \\
\bottomrule
\end{tabular}}
\label{tab:APP_q1_MCE_perc_and_sigma}
\end{table}

\begin{table}[]
    \centering
    \setlength{\tabcolsep}{4pt}
\caption{Average per-group $\text{MCE}_\%$ and $\text{MCE}_{\sigma}$ ($\pm$ standard error) for \method and \methodG. The \emph{family} axis does not show \texttt{PKU} since it does not contain \texttt{muse}-family responses.}
    \scalebox{0.8}{\begin{tabular}{ll|ccc|ccc}
\toprule
 &  & \multicolumn{3}{c|}{$\text{MCE}_{\%}$} & \multicolumn{3}{c}{$\text{MCE}_{\sigma}$} \\
\cmidrule(lr){3-5}\cmidrule(lr){6-8}
Dataset & Method & Position & Verbosity & Family & Position & Verbosity & Family \\
\midrule
\multirow{2}{*}{PKU} & \method & 2.78 $\pm$ 0.23 & 5.55 $\pm$ 0.28 & -- & 3.85 $\pm$ 0.31 & 7.68 $\pm$ 0.38 & -- \\
 & \methodG & \textbf{2.56 $\pm$ 0.10} & \textbf{2.81 $\pm$ 0.18} & -- & \textbf{3.58 $\pm$ 0.14} & \textbf{3.92 $\pm$ 0.25} & -- \\
\midrule
\multirow{2}{*}{AEX} & \method & 3.72 $\pm$ 0.35 & 4.49 $\pm$ 0.27 & 2.58 $\pm$ 0.16 & 4.27 $\pm$ 0.41 & 5.14 $\pm$ 0.29 & 2.96 $\pm$ 0.19 \\
 & \methodG & \textbf{2.52 $\pm$ 0.30} & \textbf{2.55 $\pm$ 0.24} & \textbf{2.50 $\pm$ 0.19} & \textbf{2.90 $\pm$ 0.34} & \textbf{2.93 $\pm$ 0.29} & \textbf{2.87 $\pm$ 0.22} \\
\midrule
\multirow{2}{*}{PPE} & \method & 5.41 $\pm$ 0.18 & 7.48 $\pm$ 0.41 & 4.77 $\pm$ 0.34 & 2.63 $\pm$ 0.10 & 3.62 $\pm$ 0.19 & 2.32 $\pm$ 0.17 \\
 & \methodG & \textbf{3.67 $\pm$ 0.30} & \textbf{3.99 $\pm$ 0.18} & \textbf{4.56 $\pm$ 0.41} & \textbf{1.79 $\pm$ 0.15} & \textbf{1.95 $\pm$ 0.09} & \textbf{2.23 $\pm$ 0.21} \\
\midrule
\multirow{2}{*}{RWB} & \method & 6.90 $\pm$ 1.11 & 5.22 $\pm$ 0.41 & 5.21 $\pm$ 0.27 & 3.70 $\pm$ 0.57 & \textbf{2.82 $\pm$ 0.25} & \textbf{2.81 $\pm$ 0.18} \\
 & \methodG & \textbf{4.66 $\pm$ 0.45} & \textbf{4.70 $\pm$ 0.40} & \textbf{4.64 $\pm$ 0.15} & \textbf{2.83 $\pm$ 0.21} & 2.87 $\pm$ 0.22 & 2.83 $\pm$ 0.10 \\
\midrule
\multirow{2}{*}{MTB} & \method & 13.42 $\pm$ 1.59 & 13.46 $\pm$ 1.28 & 11.89 $\pm$ 0.79 & 2.23 $\pm$ 0.26 & 2.20 $\pm$ 0.20 & 1.96 $\pm$ 0.12 \\
 & \methodG & \textbf{12.53 $\pm$ 1.54} & \textbf{12.33 $\pm$ 0.86} & \textbf{11.63 $\pm$ 0.23} & \textbf{2.10 $\pm$ 0.24} & \textbf{2.05 $\pm$ 0.14} & \textbf{1.91 $\pm$ 0.04} \\
\bottomrule
\end{tabular}}
\label{tab:APP_q2_MCE_perc_and_sigma}
\end{table}

\begin{table}[]
    \centering
        \caption{Average per-group $\text{MCE}_\%$ and $\text{MCE}_{\sigma}$ ($\pm$ standard error) for \method and the mitigation baselines across datasets over the three biases documented in the literature (\ie position, verbosity, and family) and the \emph{unspecified} groups.}
    \begin{tabular}{ll|cc|cc}
\toprule
 &  & \multicolumn{2}{c|}{$\text{MCE}_{\%}$} & \multicolumn{2}{c}{$\texttt{MCE}_\sigma$} \\
\cmidrule(lr){3-4}\cmidrule(lr){5-6}
Dataset & Method & Specified & Unspecified & Specified & Unspecified \\
\midrule
\multirow{6}{*}{\rotatebox{90}{\texttt{PKU}}} & \method & \textbf{5.55 $\pm$ 0.28} & \textbf{3.89 $\pm$ 0.36} & \textbf{7.68 $\pm$ 0.38} & \textbf{5.38 $\pm$ 0.50} \\
 & $f_0$ & 22.46 $\pm$ 0.28 & 12.34 $\pm$ 0.40 & 32.91 $\pm$ 0.47 & 18.08 $\pm$ 0.61 \\
 & \texttt{BPC} & 19.23 $\pm$ 0.40 & 7.91 $\pm$ 0.25 & 27.26 $\pm$ 0.56 & 11.22 $\pm$ 0.36 \\
 & \texttt{CalibraEval} & 20.49 $\pm$ 0.35 & 7.66 $\pm$ 0.19 & 27.24 $\pm$ 0.47 & 10.18 $\pm$ 0.26 \\
 & \texttt{PORTIA} & 21.04 $\pm$ 0.31 & 10.26 $\pm$ 0.10 & 30.20 $\pm$ 0.45 & 14.73 $\pm$ 0.19 \\
 & \texttt{LenControl} & 21.15 $\pm$ 0.71 & 12.40 $\pm$ 0.54 & 30.99 $\pm$ 1.02 & 18.16 $\pm$ 0.75 \\
\midrule
\multirow{6}{*}{\rotatebox{90}{\texttt{AEX}}} & \method & \textbf{5.09 $\pm$ 0.19} & \textbf{3.29 $\pm$ 0.14} & \textbf{5.83 $\pm$ 0.21} & \textbf{3.77 $\pm$ 0.17} \\
 & $f_0$ & 16.17 $\pm$ 0.33 & 16.11 $\pm$ 0.33 & 20.89 $\pm$ 0.40 & 20.82 $\pm$ 0.39 \\
 & \texttt{BPC} & 10.46 $\pm$ 0.35 & 8.21 $\pm$ 0.36 & 12.84 $\pm$ 0.41 & 10.09 $\pm$ 0.42 \\
 & \texttt{CalibraEval} & 12.44 $\pm$ 0.46 & 11.18 $\pm$ 0.51 & 15.63 $\pm$ 0.58 & 14.06 $\pm$ 0.66 \\
 & \texttt{PORTIA} & 10.06 $\pm$ 0.47 & 10.05 $\pm$ 0.47 & 12.80 $\pm$ 0.58 & 12.79 $\pm$ 0.58 \\
 & \texttt{LenControl} & 13.15 $\pm$ 0.24 & 12.96 $\pm$ 0.34 & 17.11 $\pm$ 0.30 & 16.86 $\pm$ 0.42 \\
\midrule
\multirow{6}{*}{{\rotatebox{90}{\texttt{PPE}}}} & \method & \textbf{7.97 $\pm$ 0.43} & \textbf{7.16 $\pm$ 0.21} & \textbf{3.87 $\pm$ 0.20} & \textbf{3.47 $\pm$ 0.10} \\
 & $f_0$ & 16.05 $\pm$ 0.56 & 15.01 $\pm$ 0.65 & 8.93 $\pm$ 0.32 & 8.35 $\pm$ 0.38 \\
 & \texttt{BPC} & 12.36 $\pm$ 0.41 & 11.31 $\pm$ 0.55 & 6.56 $\pm$ 0.21 & 6.00 $\pm$ 0.29 \\
 & \texttt{CalibraEval} & 13.64 $\pm$ 0.78 & 12.69 $\pm$ 0.76 & 7.12 $\pm$ 0.63 & 6.61 $\pm$ 0.56 \\
 & \texttt{PORTIA} & 10.77 $\pm$ 0.37 & 10.66 $\pm$ 0.62 & 5.90 $\pm$ 0.21 & 5.83 $\pm$ 0.33 \\
 & \texttt{LenControl} & 14.47 $\pm$ 0.18 & 14.06 $\pm$ 0.36 & 8.11 $\pm$ 0.12 & 7.89 $\pm$ 0.22 \\
\midrule
\multirow{6}{*}{\rotatebox{90}{\texttt{RWB}}} & \method & \textbf{7.27 $\pm$ 0.89} & \textbf{8.00 $\pm$ 0.43} & \textbf{3.90 $\pm$ 0.45} & \textbf{4.29 $\pm$ 0.16} \\
 & $f_0$ & 13.08 $\pm$ 0.42 & 13.77 $\pm$ 1.12 & 5.98 $\pm$ 0.15 & 6.28 $\pm$ 0.46 \\
 & \texttt{BPC} & 11.71 $\pm$ 0.27 & 10.90 $\pm$ 0.49 & 4.77 $\pm$ 0.08 & 4.44 $\pm$ 0.18 \\
 & \texttt{CalibraEval} & 11.05 $\pm$ 0.85 & 13.42 $\pm$ 0.22 & 4.54 $\pm$ 0.50 & 5.52 $\pm$ 0.41 \\
 & \texttt{PORTIA} & 10.83 $\pm$ 0.67 & 11.23 $\pm$ 0.44 & 4.87 $\pm$ 0.27 & 5.06 $\pm$ 0.25 \\
 & \texttt{LenControl} & 10.82 $\pm$ 0.67 & 8.87 $\pm$ 0.46 & 5.60 $\pm$ 0.35 & 4.58 $\pm$ 0.17 \\
\midrule
\multirow{6}{*}{\rotatebox{90}{\texttt{MTB}}} & \method & \textbf{16.36 $\pm$ 0.84} & \textbf{18.80 $\pm$ 0.57} & \textbf{2.70 $\pm$ 0.11} & \textbf{3.12 $\pm$ 0.13} \\
 & $f_0$ & 19.59 $\pm$ 0.56 & 21.74 $\pm$ 0.45 & 3.50 $\pm$ 0.12 & 3.88 $\pm$ 0.05 \\
 & \texttt{BPC} & 18.00 $\pm$ 1.11 & 19.00 $\pm$ 0.69 & 3.04 $\pm$ 0.19 & 3.21 $\pm$ 0.12 \\
 & \texttt{CalibraEval} & 20.40 $\pm$ 0.96 & 23.38 $\pm$ 1.13 & 3.55 $\pm$ 0.20 & 4.07 $\pm$ 0.22 \\
 & \texttt{PORTIA} & 21.21 $\pm$ 2.32 & 26.31 $\pm$ 3.89 & 3.51 $\pm$ 0.35 & 4.37 $\pm$ 0.63 \\
 & \texttt{LenControl} & 17.26 $\pm$ 1.05 & 21.16 $\pm$ 0.72 & 3.20 $\pm$ 0.19 & 3.93 $\pm$ 0.14 \\
\bottomrule
\end{tabular}
\label{tab:APP_specified_unspecified}
\end{table}

\begin{table}[]
    \centering
   \caption{Average per-group $\text{MCE}_\%$  and $\text{MCE}_{\sigma}$($\pm$ standard error) for \method and the mitigation baselines across datasets with \texttt{Qwen~2.5-7B-Instruct} used as the judge. The \emph{family} axis contains \texttt{AEX} and \texttt{PPE} only: \texttt{PKU}, \texttt{RWB} and \texttt{MTB} do not contain \texttt{qwen}-family responses.}
    \scalebox{0.8}{\begin{tabular}{ll|ccc|ccc}
\toprule
 &  & \multicolumn{3}{c|}{$\text{MCE}_{\%}$} & \multicolumn{3}{c}{$\text{MCE}_{\sigma}$} \\
\cmidrule(lr){3-5}\cmidrule(lr){6-8}
Dataset & Method & Position & Verbosity & Family & Position & Verbosity & Family \\
\midrule
\multirow{6}{*}{\rotatebox{90}{\texttt{PKU}}} & \method & \textbf{2.39 $\pm$ 0.17} & \textbf{6.32 $\pm$ 0.53} & -- & \textbf{3.17 $\pm$ 0.23} & \textbf{8.39 $\pm$ 0.69} & -- \\
 & $f_0$ & 19.81 $\pm$ 0.28 & 19.80 $\pm$ 0.28 & -- & 31.48 $\pm$ 0.45 & 31.45 $\pm$ 0.45 & -- \\
 & \texttt{BPC} & 8.86 $\pm$ 0.15 & 9.43 $\pm$ 0.13 & -- & 12.91 $\pm$ 0.21 & 13.73 $\pm$ 0.20 & -- \\
 & \texttt{CalibraEval} & 8.13 $\pm$ 0.33 & 11.05 $\pm$ 0.38 & -- & 10.95 $\pm$ 0.48 & 14.87 $\pm$ 0.49 & -- \\
 & \texttt{PORTIA} & 10.47 $\pm$ 0.10 & 17.09 $\pm$ 0.14 & -- & 14.50 $\pm$ 0.12 & 23.66 $\pm$ 0.21 & -- \\
 & \texttt{LenControl} & 16.56 $\pm$ 0.11 & 16.55 $\pm$ 0.11 & -- & 26.13 $\pm$ 0.18 & 26.10 $\pm$ 0.18 & -- \\
\midrule
\multirow{6}{*}{\rotatebox{90}{\texttt{AEX}}} & \method & \textbf{6.78 $\pm$ 0.39} & \textbf{4.63 $\pm$ 0.29} & \textbf{2.70 $\pm$ 0.27} & \textbf{7.37 $\pm$ 0.44} & \textbf{5.03 $\pm$ 0.31} & \textbf{2.93 $\pm$ 0.29} \\
 & $f_0$ & 40.95 $\pm$ 0.42 & 40.95 $\pm$ 0.42 & 40.95 $\pm$ 0.42 & 58.37 $\pm$ 0.55 & 58.35 $\pm$ 0.55 & 58.37 $\pm$ 0.55 \\
 & \texttt{BPC} & 10.44 $\pm$ 0.30 & 9.92 $\pm$ 0.27 & 9.92 $\pm$ 0.27 & 12.39 $\pm$ 0.36 & 11.77 $\pm$ 0.31 & 11.77 $\pm$ 0.30 \\
 & \texttt{CalibraEval} & 7.61 $\pm$ 0.28 & 9.12 $\pm$ 0.38 & 5.40 $\pm$ 0.10 & 8.12 $\pm$ 0.29 & 9.72 $\pm$ 0.39 & 5.76 $\pm$ 0.09 \\
 & \texttt{PORTIA} & 13.79 $\pm$ 0.26 & 13.75 $\pm$ 0.29 & 13.79 $\pm$ 0.26 & 17.11 $\pm$ 0.32 & 17.06 $\pm$ 0.35 & 17.11 $\pm$ 0.32 \\
 & \texttt{LenControl} & 16.79 $\pm$ 0.30 & 16.79 $\pm$ 0.30 & 16.79 $\pm$ 0.30 & 21.45 $\pm$ 0.37 & 21.45 $\pm$ 0.37 & 21.45 $\pm$ 0.37 \\
\midrule
\multirow{6}{*}{\rotatebox{90}{\texttt{PPE}}} & \method & \textbf{8.27 $\pm$ 0.45} & \textbf{7.19 $\pm$ 0.41} & \textbf{5.74 $\pm$ 0.91} & \textbf{3.81 $\pm$ 0.20} & \textbf{3.31 $\pm$ 0.19} & \textbf{2.64 $\pm$ 0.42} \\
 & $f_0$ & 39.11 $\pm$ 0.72 & 39.18 $\pm$ 0.71 & 39.11 $\pm$ 0.72 & 24.17 $\pm$ 0.41 & 24.17 $\pm$ 0.39 & 24.17 $\pm$ 0.41 \\
 & \texttt{BPC} & 11.98 $\pm$ 0.57 & 11.49 $\pm$ 0.68 & 11.46 $\pm$ 0.72 & 6.21 $\pm$ 0.31 & 5.94 $\pm$ 0.37 & 5.94 $\pm$ 0.39 \\
 & \texttt{CalibraEval} & 13.01 $\pm$ 2.65 & 12.21 $\pm$ 2.54 & 11.02 $\pm$ 2.96 & 6.89 $\pm$ 1.97 & 6.46 $\pm$ 1.87 & 5.91 $\pm$ 2.07 \\
 & \texttt{PORTIA} & 14.63 $\pm$ 0.59 & 14.57 $\pm$ 0.57 & 14.55 $\pm$ 0.58 & 7.84 $\pm$ 0.29 & 7.79 $\pm$ 0.28 & 7.80 $\pm$ 0.29 \\
 & \texttt{LenControl} & 19.47 $\pm$ 0.69 & 19.43 $\pm$ 0.68 & 19.47 $\pm$ 0.69 & 10.97 $\pm$ 0.38 & 10.93 $\pm$ 0.38 & 10.97 $\pm$ 0.38 \\
\midrule
\multirow{6}{*}{\rotatebox{90}{\texttt{RWB}}} & \method & 9.98 $\pm$ 1.21 & \textbf{9.70 $\pm$ 1.04} & -- & 3.51 $\pm$ 0.44 & \textbf{3.42 $\pm$ 0.38} & -- \\
 & $f_0$ & 13.66 $\pm$ 1.07 & 22.11 $\pm$ 1.42 & -- & 5.24 $\pm$ 0.41 & 8.47 $\pm$ 0.54 & -- \\
 & \texttt{BPC} & \textbf{8.64 $\pm$ 1.15} & 16.87 $\pm$ 1.21 & -- & \textbf{2.96 $\pm$ 0.42} & 5.74 $\pm$ 0.40 & -- \\
 & \texttt{CalibraEval} & 11.04 $\pm$ 1.24 & 17.03 $\pm$ 1.76 & -- & 3.83 $\pm$ 0.60 & 5.89 $\pm$ 0.85 & -- \\
 & \texttt{PORTIA} & 12.61 $\pm$ 1.31 & 16.30 $\pm$ 1.84 & -- & 4.00 $\pm$ 0.42 & 5.17 $\pm$ 0.57 & -- \\
 & \texttt{LenControl} & 14.78 $\pm$ 1.22 & 12.72 $\pm$ 1.16 & -- & 5.99 $\pm$ 0.47 & 5.15 $\pm$ 0.44 & -- \\
\midrule
\multirow{6}{*}{\rotatebox{90}{\texttt{MTB}}} & \method & \textbf{11.83 $\pm$ 0.77} & 14.22 $\pm$ 0.69 & -- & \textbf{1.85 $\pm$ 0.16} & \textbf{2.19 $\pm$ 0.13} & -- \\
 & $f_0$ & 13.88 $\pm$ 1.07 & 14.25 $\pm$ 1.03 & -- & 2.39 $\pm$ 0.23 & 2.44 $\pm$ 0.23 & -- \\
 & \texttt{BPC} & 14.28 $\pm$ 1.10 & 14.22 $\pm$ 1.23 & -- & 2.23 $\pm$ 0.16 & 2.20 $\pm$ 0.15 & -- \\
 & \texttt{CalibraEval} & 13.80 $\pm$ 1.33 & 20.30 $\pm$ 1.83 & -- & 2.16 $\pm$ 0.29 & 3.06 $\pm$ 0.24 & -- \\
 & \texttt{PORTIA} & 16.95 $\pm$ 2.67 & 24.30 $\pm$ 3.90 & -- & 2.48 $\pm$ 0.39 & 3.52 $\pm$ 0.54 & -- \\
 & \texttt{LenControl} & 15.93 $\pm$ 1.08 & \textbf{14.18 $\pm$ 0.86} & -- & 2.94 $\pm$ 0.25 & 2.58 $\pm$ 0.16 & -- \\
\bottomrule
\end{tabular}}
\label{tab:APP_q1_qwenjudge_MCE}
\end{table}

\begin{table}[]
    \centering
    \caption{Average per-group $\text{MCE}_\%$ and $\text{MCE}_{\sigma}$  ($\pm 1$ s.e.) for \method (embeddings from the judge's last hidden state), \method{\tt (Qwen3-0.6B)} (embeddings from \texttt{Qwen3-Embedding-0.6B}) and, as a reference, the \emph{best} of the \cref{fig:Q1} baselines across datasets. Family is undefined on \texttt{PKU}, since it contains no response from the judge's own family.}
\setlength{\tabcolsep}{4pt}
\scalebox{0.75}{\begin{tabular}{ll|ccc|ccc}
\toprule
 &  & \multicolumn{3}{c|}{$\text{MCE}_{\%}$} & \multicolumn{3}{c}{$\text{MCE}_{\sigma}$} \\
\cmidrule(lr){3-5}\cmidrule(lr){6-8}
Dataset & Method & Position & Verbosity & Family & Position & Verbosity & Family \\
\midrule
\multirow{3}{*}{\rotatebox{90}{\texttt{PKU}}} & \method & 2.78 $\pm$ 0.23 & \textbf{5.55 $\pm$ 0.28} & -- & 3.85 $\pm$ 0.31 & \textbf{7.68 $\pm$ 0.38} & -- \\
 & \method{} (Qwen3-0.6B) & \textbf{2.73 $\pm$ 0.22} & 7.39 $\pm$ 0.62 & -- & \textbf{3.74 $\pm$ 0.31} & 10.12 $\pm$ 0.84 & -- \\
 & \emph{best baseline} & 7.66 $\pm$ 0.19 & 19.25 $\pm$ 0.39 & -- & 10.18 $\pm$ 0.26 & 27.26 $\pm$ 0.47 & -- \\
\midrule
\multirow{3}{*}{\rotatebox{90}{\texttt{AEX}}} & \method & \textbf{3.72 $\pm$ 0.35} & \textbf{4.49 $\pm$ 0.27} & 2.58 $\pm$ 0.16 & \textbf{4.27 $\pm$ 0.41} & \textbf{5.14 $\pm$ 0.29} & 2.96 $\pm$ 0.19 \\
 & \method{} (Qwen3-0.6B) & 4.17 $\pm$ 0.31 & 5.49 $\pm$ 0.28 & \textbf{2.28 $\pm$ 0.14} & 4.70 $\pm$ 0.36 & 6.19 $\pm$ 0.31 & \textbf{2.57 $\pm$ 0.16} \\
 & \emph{best baseline} & 8.21 $\pm$ 0.36 & 10.06 $\pm$ 0.47 & 8.21 $\pm$ 0.36 & 10.09 $\pm$ 0.42 & 12.80 $\pm$ 0.58 & 10.09 $\pm$ 0.42 \\
\midrule
\multirow{3}{*}{\rotatebox{90}{\texttt{PPE}}} & \method & \textbf{5.41 $\pm$ 0.18} & \textbf{7.48 $\pm$ 0.41} & \textbf{4.77 $\pm$ 0.34} & \textbf{2.63 $\pm$ 0.10} & \textbf{3.62 $\pm$ 0.19} & \textbf{2.32 $\pm$ 0.17} \\
 & \method{} (Qwen3-0.6B) & 5.65 $\pm$ 0.27 & 8.60 $\pm$ 0.55 & 5.08 $\pm$ 0.27 & 2.73 $\pm$ 0.14 & 4.14 $\pm$ 0.25 & 2.45 $\pm$ 0.14 \\
 & \emph{best baseline} & 9.05 $\pm$ 0.60 & 10.71 $\pm$ 0.32 & 9.06 $\pm$ 1.71 & 4.80 $\pm$ 0.33 & 5.86 $\pm$ 0.18 & 4.81 $\pm$ 1.05 \\
\midrule
\multirow{3}{*}{\rotatebox{90}{\texttt{RWB}}} & \method & 6.90 $\pm$ 1.11 & \textbf{5.22 $\pm$ 0.41} & 5.21 $\pm$ 0.27 & 3.70 $\pm$ 0.57 & \textbf{2.82 $\pm$ 0.25} & 2.81 $\pm$ 0.18 \\
 & \method{} (Qwen3-0.6B) & 6.67 $\pm$ 0.95 & 7.88 $\pm$ 0.69 & 6.02 $\pm$ 0.63 & 2.94 $\pm$ 0.43 & 3.44 $\pm$ 0.22 & 2.64 $\pm$ 0.27 \\
 & \emph{best baseline} & \textbf{6.10 $\pm$ 0.26} & 7.62 $\pm$ 0.50 & \textbf{4.68 $\pm$ 0.40} & \textbf{2.48 $\pm$ 0.09} & 3.95 $\pm$ 0.28 & \textbf{2.17 $\pm$ 0.15} \\
\midrule
\multirow{3}{*}{\rotatebox{90}{\texttt{MTB}}} & \method & \textbf{13.42 $\pm$ 1.59} & \textbf{13.46 $\pm$ 1.28} & 11.89 $\pm$ 0.79 & \textbf{2.23 $\pm$ 0.26} & \textbf{2.20 $\pm$ 0.20} & 1.96 $\pm$ 0.12 \\
 & \method{} (Qwen3-0.6B) & 14.15 $\pm$ 2.50 & 14.31 $\pm$ 1.68 & \textbf{10.43 $\pm$ 0.76} & 2.44 $\pm$ 0.47 & 2.39 $\pm$ 0.27 & \textbf{1.74 $\pm$ 0.16} \\
 & \emph{best baseline} & 14.12 $\pm$ 1.99 & 14.63 $\pm$ 0.65 & 10.51 $\pm$ 1.04 & 2.42 $\pm$ 0.24 & 2.63 $\pm$ 0.16 & 1.79 $\pm$ 0.19 \\
\bottomrule
\end{tabular}  } 
\label{tab:APP_q1_qwenjudge_MCE_embed}
\end{table}

\begin{table}[]
    \centering
\setlength{\tabcolsep}{4pt}
\caption{Average per-group $\text{MCE}_\%$ and  $\text{MCE}_{\sigma}$ ($\pm 1$ s.e.) for \method and the calibration and multicalibration baselines across datasets. Family is undefined on \texttt{PKU}, since it contains no response from the judge's own family.}
    \scalebox{0.8}{\begin{tabular}{ll|ccc|ccc}
\toprule
 &  & \multicolumn{3}{c|}{$\text{MCE}_{\%}$} & \multicolumn{3}{c}{$\text{MCE}_{\sigma}$} \\
\cmidrule(lr){3-5}\cmidrule(lr){6-8}
Dataset & Method & Position & Verbosity & Family & Position & Verbosity & Family \\
\midrule
\multirow{6}{*}{\rotatebox{90}{\texttt{PKU}}} & \method & \textbf{2.78 $\pm$ 0.23} & \textbf{5.55 $\pm$ 0.28} & -- & 3.85 $\pm$ 0.31 & \textbf{7.68 $\pm$ 0.38} & -- \\
 & $f_0$ & 12.34 $\pm$ 0.40 & 22.47 $\pm$ 0.28 & -- & 18.08 $\pm$ 0.61 & 32.92 $\pm$ 0.47 & -- \\
 & \texttt{Temperature} & 5.56 $\pm$ 0.28 & 19.84 $\pm$ 0.20 & -- & 7.10 $\pm$ 0.36 & 25.32 $\pm$ 0.29 & -- \\
 & \texttt{Platt} & 5.29 $\pm$ 0.20 & 18.64 $\pm$ 0.51 & -- & 6.76 $\pm$ 0.26 & 23.80 $\pm$ 0.64 & -- \\
 & \texttt{LOGR} & 2.87 $\pm$ 0.15 & 17.71 $\pm$ 0.48 & -- & \textbf{3.68 $\pm$ 0.19} & 22.73 $\pm$ 0.61 & -- \\
 & \texttt{IGLB} & 3.33 $\pm$ 0.36 & 16.86 $\pm$ 0.47 & -- & 4.30 $\pm$ 0.46 & 21.78 $\pm$ 0.60 & -- \\
\midrule
\multirow{6}{*}{\rotatebox{90}{\texttt{AEX}}} & \method & \textbf{3.72 $\pm$ 0.35} & \textbf{4.49 $\pm$ 0.27} & 2.58 $\pm$ 0.16 & \textbf{4.27 $\pm$ 0.41} & \textbf{5.14 $\pm$ 0.29} & 2.96 $\pm$ 0.19 \\
 & $f_0$ & 16.11 $\pm$ 0.33 & 16.16 $\pm$ 0.33 & 16.11 $\pm$ 0.33 & 20.82 $\pm$ 0.39 & 20.88 $\pm$ 0.40 & 20.82 $\pm$ 0.39 \\
 & \texttt{Temperature} & 6.58 $\pm$ 0.22 & 11.52 $\pm$ 0.43 & 4.70 $\pm$ 0.22 & 7.11 $\pm$ 0.25 & 12.46 $\pm$ 0.46 & 5.08 $\pm$ 0.23 \\
 & \texttt{Platt} & 5.23 $\pm$ 0.51 & 9.50 $\pm$ 0.25 & 3.93 $\pm$ 0.22 & 5.67 $\pm$ 0.56 & 10.28 $\pm$ 0.26 & 4.26 $\pm$ 0.25 \\
 & \texttt{LOGR} & 4.13 $\pm$ 0.44 & 8.86 $\pm$ 0.27 & \textbf{1.99 $\pm$ 0.10} & 4.50 $\pm$ 0.48 & 9.64 $\pm$ 0.28 & \textbf{2.17 $\pm$ 0.10} \\
 & \texttt{IGLB} & 4.00 $\pm$ 0.35 & 8.73 $\pm$ 0.30 & 2.32 $\pm$ 0.07 & 4.38 $\pm$ 0.39 & 9.55 $\pm$ 0.31 & 2.53 $\pm$ 0.07 \\
\midrule
\multirow{6}{*}{\rotatebox{90}{\texttt{PPE}}} & \method & 5.41 $\pm$ 0.18 & \textbf{7.48 $\pm$ 0.41} & \textbf{4.77 $\pm$ 0.34} & 2.63 $\pm$ 0.10 & \textbf{3.62 $\pm$ 0.19} & \textbf{2.32 $\pm$ 0.17} \\
 & $f_0$ & 15.01 $\pm$ 0.65 & 16.08 $\pm$ 0.52 & 15.01 $\pm$ 0.65 & 8.35 $\pm$ 0.38 & 8.93 $\pm$ 0.30 & 8.35 $\pm$ 0.38 \\
 & \texttt{Temperature} & 6.37 $\pm$ 0.32 & 12.07 $\pm$ 0.87 & 5.12 $\pm$ 0.47 & 2.99 $\pm$ 0.15 & 5.65 $\pm$ 0.41 & 2.40 $\pm$ 0.22 \\
 & \texttt{Platt} & 5.75 $\pm$ 0.28 & 11.69 $\pm$ 0.47 & 5.11 $\pm$ 0.23 & 2.70 $\pm$ 0.13 & 5.48 $\pm$ 0.21 & 2.40 $\pm$ 0.11 \\
 & \texttt{LOGR} & \textbf{5.28 $\pm$ 0.40} & 11.33 $\pm$ 0.43 & 5.03 $\pm$ 0.24 & \textbf{2.49 $\pm$ 0.19} & 5.34 $\pm$ 0.19 & 2.37 $\pm$ 0.12 \\
 & \texttt{IGLB} & 5.74 $\pm$ 0.26 & 11.59 $\pm$ 0.42 & 5.51 $\pm$ 0.35 & 2.73 $\pm$ 0.14 & 5.51 $\pm$ 0.18 & 2.62 $\pm$ 0.16 \\
\midrule
\multirow{6}{*}{\rotatebox{90}{\texttt{RWB}}} & \method & 6.90 $\pm$ 1.11 & \textbf{5.22 $\pm$ 0.41} & \textbf{5.21 $\pm$ 0.27} & 3.70 $\pm$ 0.57 & \textbf{2.82 $\pm$ 0.25} & 2.81 $\pm$ 0.18 \\
 & $f_0$ & 7.35 $\pm$ 0.91 & 13.08 $\pm$ 0.42 & 6.78 $\pm$ 0.67 & 3.35 $\pm$ 0.41 & 5.98 $\pm$ 0.15 & 3.09 $\pm$ 0.28 \\
 & \texttt{Temperature} & 8.38 $\pm$ 1.07 & 12.16 $\pm$ 0.38 & 6.21 $\pm$ 0.68 & 3.16 $\pm$ 0.41 & 4.58 $\pm$ 0.13 & 2.34 $\pm$ 0.25 \\
 & \texttt{Platt} & 7.17 $\pm$ 0.76 & 10.33 $\pm$ 0.29 & 6.24 $\pm$ 0.48 & 2.71 $\pm$ 0.29 & 3.90 $\pm$ 0.09 & 2.36 $\pm$ 0.18 \\
 & \texttt{LOGR} & 7.65 $\pm$ 0.74 & 11.88 $\pm$ 0.31 & 6.43 $\pm$ 0.73 & 2.92 $\pm$ 0.28 & 4.53 $\pm$ 0.10 & 2.45 $\pm$ 0.27 \\
 & \texttt{IGLB} & \textbf{6.50 $\pm$ 0.68} & 11.20 $\pm$ 0.76 & 5.89 $\pm$ 0.71 & \textbf{2.45 $\pm$ 0.23} & 4.23 $\pm$ 0.27 & \textbf{2.22 $\pm$ 0.23} \\
\midrule
\multirow{6}{*}{\rotatebox{90}{\texttt{MTB}}} & \method & \textbf{13.42 $\pm$ 1.59} & \textbf{13.46 $\pm$ 1.28} & \textbf{11.89 $\pm$ 0.79} & \textbf{2.23 $\pm$ 0.26} & \textbf{2.20 $\pm$ 0.20} & \textbf{1.96 $\pm$ 0.12} \\
 & $f_0$ & 15.86 $\pm$ 2.03 & 16.15 $\pm$ 0.82 & 12.00 $\pm$ 0.92 & 2.86 $\pm$ 0.40 & 2.85 $\pm$ 0.09 & 2.15 $\pm$ 0.22 \\
 & \texttt{Temperature} & 15.37 $\pm$ 1.99 & 18.31 $\pm$ 1.64 & 13.26 $\pm$ 0.65 & 2.33 $\pm$ 0.33 & 2.72 $\pm$ 0.18 & 1.99 $\pm$ 0.13 \\
 & \texttt{Platt} & 15.15 $\pm$ 2.32 & 18.53 $\pm$ 2.08 & 13.57 $\pm$ 0.52 & 2.31 $\pm$ 0.38 & 2.75 $\pm$ 0.25 & 2.04 $\pm$ 0.11 \\
 & \texttt{LOGR} & 15.17 $\pm$ 2.66 & 18.77 $\pm$ 1.28 & 15.10 $\pm$ 0.33 & 2.35 $\pm$ 0.43 & 2.86 $\pm$ 0.17 & 2.32 $\pm$ 0.09 \\
 & \texttt{IGLB} & 18.29 $\pm$ 2.31 & 19.33 $\pm$ 1.61 & 14.63 $\pm$ 1.64 & 3.10 $\pm$ 0.43 & 3.20 $\pm$ 0.21 & 2.42 $\pm$ 0.28 \\
\bottomrule
\end{tabular}}
    \label{tab:placeholder}
\end{table}

\end{document}